\def\comp{\ensuremath\mathop{\scalebox{.7}{$\circ$}}}
\newtheorem{assumption}{Assumption}
\newtheorem{corollary}{Corollary}
\ificcvfinal\pagestyle{empty}\fi
\begin{document}

\title{Improving Robustness of Adversarial Attacks Using an Affine-Invariant Gradient Estimator}

\author{Wenzhao~Xiang$^{1 \ast}$, Hang~Su$^{2 \ast}$, Chang~Liu$^{1}$, Yandong~Guo$^{3}$, Shibao~Zheng$^{1}$\\
$^{1}$ Shanghai Jiao Tong University, 
$^{2}$ Tsinghua University, 
$^{3}$ OPPO Research Institute \\
{\tt\small \{690295702,sunrise6513,sbzh\}@sjtu.edu.cn},\\
{\tt\small suhangss@mail.tsinghua.edu.cn}, {\tt\small guoyandong@oppo.com}
}

\maketitle
\ificcvfinal\thispagestyle{empty}\fi

\begin{abstract}
   As designers of artificial intelligence try to outwit hackers, both sides continue to hone in on AI's inherent vulnerabilities. Designed and trained from certain statistical distributions of data, AI's deep neural networks (DNNs) remain vulnerable to deceptive inputs that violate a DNN’s statistical, predictive assumptions. Before being fed into a neural network, however, most existing adversarial examples cannot maintain malicious functionality when applied to an affine transformation. For practical purposes, maintaining that malicious functionality serves as an important measure of the robustness of adversarial attacks. To help DNNs learn to defend themselves more thoroughly against attacks, we propose an affine-invariant adversarial attack, which can consistently produce more robust adversarial examples over affine transformations. For efficiency, we propose to disentangle current affine-transformation strategies from the Euclidean geometry coordinate plane with its geometric translations, rotations and dilations; we reformulate the latter two in polar coordinates. Afterwards, we construct an affine-invariant gradient estimator by convolving the gradient at the original image with derived kernels, which can be integrated with any gradient-based attack methods. Extensive experiments on ImageNet, including some experiments under physical condition, demonstrate that our method can significantly improve the affine invariance of adversarial examples and, as a byproduct, improve the transferability of adversarial examples, compared with alternative state-of-the-art methods.
  \footnote{The paper is under consideration at Computer Vision and Image Understanding.}
\end{abstract}

\section{Introduction}
Deep neural networks have been widely used in image recognition~\cite{he2016deep,simonyan2014very}, medical image analysis~\cite{shen2017deep,litjens2017survey}, autonomous driving~\cite{al2017deep,grigorescu2020survey}, \etc. However, recent research shows that deep neural networks remain highly vulnerable to adversarial examples~\cite{szegedy2013intriguing,biggio2013evasion,goodfellow2014explaining}, since the accuracy of image recognition may degenerate significantly with the addition of small perturbations. No wonder then that more and more attention is being paid to the existence of adversarial examples that may cause safety problems.

\begin{figure}[tp]
  \centering
  \includegraphics[width=\columnwidth]{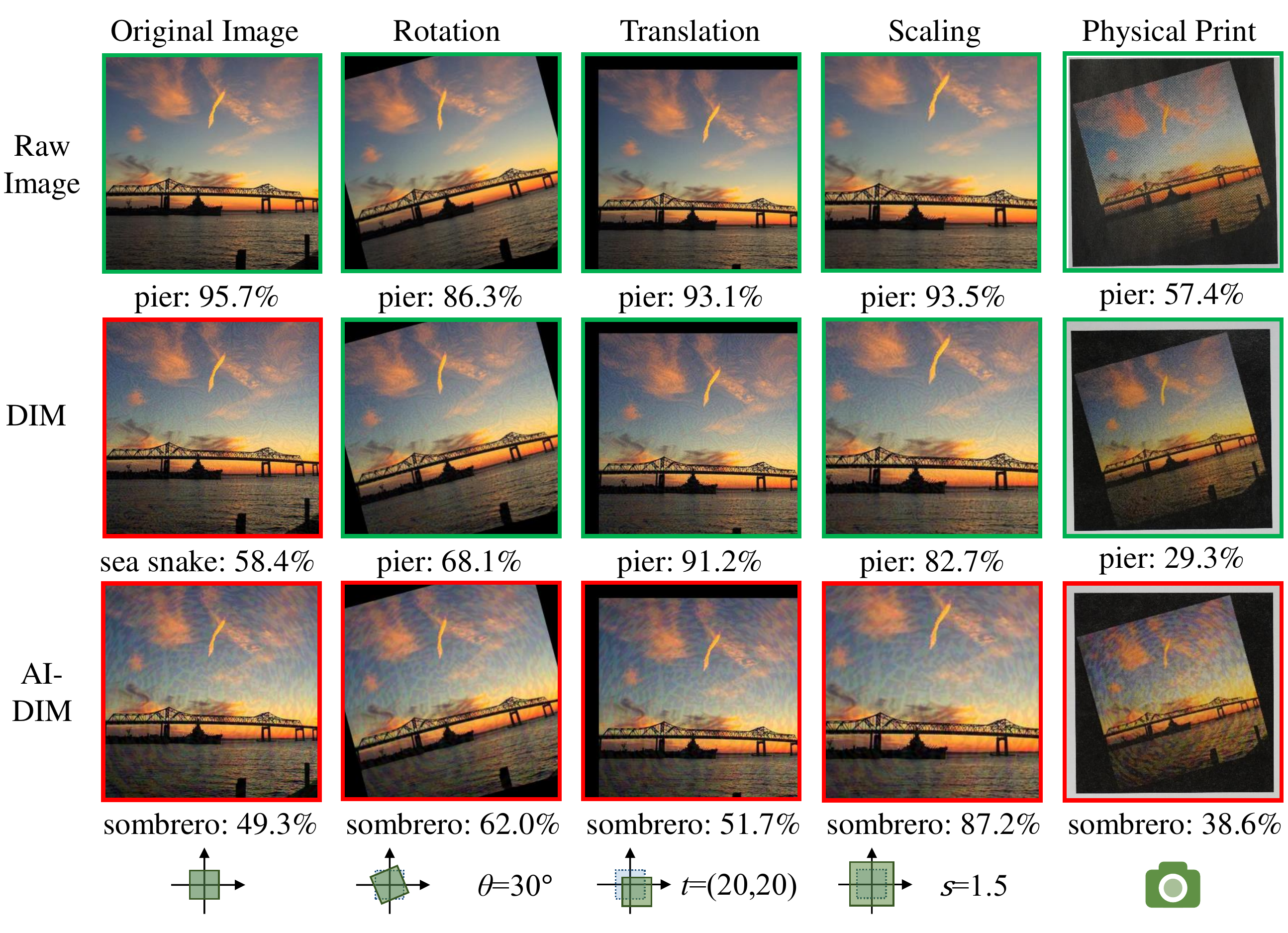} 
  \caption{The adversarial examples generated by the diversity input method (DIM)~\cite{xie2019improving} and the proposed affine-invariant DIM (AI-DIM) for the Inception-v3~\cite{szegedy2016rethinking} model. The images in the red box are misclassified and those in the green box are correctly classified. 
The notation $\theta$ means rotation angle, $t$ means translation offsets and $s$ means scaling factor.
  The proposed method shows better affine invariance under different transformations in terms of rotation, translation and scaling.}
  \vspace{-2ex}
  \label{fig:1} 
\end{figure}

Various adversarial attack methods have been proposed to generate robust and imperceptible adversarial examples, including the Fast Gradient Sign Method (FGSM)~\cite{goodfellow2014explaining}, Projected Gradient Descent (PGD)~\cite{madry2017towards}, and Carlini \& Wagner’s method (C\&W)~\cite{carlini2017towards}, \etc. 
However, most of these algorithms have not considered the affine transformation to the input images, which may influence the robustness of attacks~\cite{athalye2018synthesizing}.
As shown in Fig.~\ref{fig:1}, the resultant adversarial images generally fail to evade the classifier under affine transformation, which limits the classifier's relevance to more practical risks.  
Therefore, it is imperative to generate adversarial examples that can maintain the malicious functionality required to fool the classifier under affine transformation, 
serving as a good measurement for the robustness of adversarial attacks. 

Previous works, such as Expectation Over Transformation (EOT)~\cite{athalye2018synthesizing} and Robust Physical Perturbations (RP$_2$)~\cite{eykholt2018robust}, sample from the preset transformation distribution for estimation to make adversarial examples more robust. However, most of these works did not formally build a generic affine model, which may degrade the performance of attack methods. Besides, the sampling process adds high computational complexity, yielding low generation efficiency of adversarial examples. 

\subsection{Our Proposal}
To address the aforementioned issues, we propose a novel method with an affine-invariant gradient estimator to generate more robust adversarial examples against a general affine transformation. 
Specifically, we formulate the attack problem as an optimization to maximize the expectation of adversarial loss over the affine transformation. We generate the adversarial perturbations through an ensemble of images composed of a legitimate one and its affine-transformed versions. Afterwards, we decompose the affine transformation into translation, rotation, and scaling, and derive their transformation invariance.

To improve the calculation efficiency of the gradient estimator, we derive a kernel-based estimator to approximate the affine-invariant gradient by convolving the original gradient with the specific kernels. 
As to the translation transformation, we implement the convolution operation referred to as the Translation-Invariant Method (TI)~\cite{dong2019evading}. For rotation and scaling with higher complexity, we transfer the expectation of different rotation and scaling transformations into convolution in polar space. Theoretical analysis shows that the rotation and scaling invariance can be approximately equivalent to the translation invariance in polar space. By combining our method with any of the gradient-based attack methods (\eg, FGSM~\cite{goodfellow2014explaining}, PGD~\cite{madry2017towards}, \etc.), we can obtain adversarial examples that are more robust and transferable to affine transformation, and with relatively lower computational cost.

Additionally, as an enhancement of TI~\cite{dong2019evading}, the proposed attack further improves the input diversity, which means better transferability for defense models according to \cite{xie2019improving,dong2019evading}. Therefore, when set as the initialization for query-based black-box attacks, the proposed method can further improve the attack success rate and reduce the required queries. 

Experiments on the ImageNet dataset~\cite{russakovsky2015imagenet} validate the effectiveness of the proposed method. 
Our best method improves the attack success rate by $35.5\%$ and saves about $99\%$ on the computation cost, compared to EOT~\cite{athalye2018synthesizing}. To verify that our method performs better when facing complex transformations in the physical world, we design physical experiments on the ImageNet classification task, and exhibit the effectiveness of our method under physical conditions.
As a byproduct, we improve the transferability of the generated adversarial examples, with a $7.5\%$ higher success rate than the state-of-the-art transfer-based black-box attack against six defense models. Specifically, when set as the initialization for black-box attacks, our method can improve the attack success rate and greatly reduce the required queries by up to 95\%.

In summary, we make several technical contributions:
\begin{itemize}
    \item We introduce an affine-invariant attack framework to generate adversarial examples with better robustness for affine transformations, and propose a kernel-based gradient estimator to greatly improve the efficiency of our algorithm;
    \item The affine-invariant adversarial examples show great transferability for defense models and can serve as a good initialization for the black-box attacks, which improves the attack success rate and greatly reduces the queries;
    \item We design physical experiments on the Imagenet classification task, in which we print all test images and introduce transformations in the physical world, and we first statistically verified that our attacks exhibit better robustness to the complex transformations in the physical world.
\end{itemize}

The remainder of this paper is organized as follows. In Section~\ref{sec:2}, we review the background of adversarial attacks and defenses. In Section~\ref{sec:3}, we explain how the proposed affine-invariant gradient estimator works to enhance the basic attacks. In Section~\ref{sec:4}, we give a detailed analysis of gradient approximation error. Furthermore, we conduct extensive experiments and have a short discussion about the relationship between affine-invariance and transferability of adversarial examples in Section~\ref{sec:5}. 
Finally, we summarize the entire paper in Section~\ref{sec:7}.

\section{Background}
\label{sec:2}

In this section, we give a detailed description of the background of adversarial attacks and defenses. Let $\boldsymbol{x}^{real}$ denote the original image; $y$ denote the ground-truth label of the corresponding $\boldsymbol{x}^{real}$; and $\boldsymbol{x}^{adv}$ denote an adversarial example for $\boldsymbol{x}^{real}$. A classifier can be denoted as $f(\boldsymbol{x}): \boldsymbol{X} \rightarrow Y$, where $\boldsymbol{x} \in \boldsymbol{X} \subset \mathbb{R}^d$ is the input image, and $Y = \{1, 2, \cdots, L\}$ is the class label with $L$ being the total number of classes. Our goal is to generate an adversarial example $\boldsymbol{x}^{adv}$, which is not visually different from the original image $\boldsymbol{x}^{real}$, but can fool the classifier. 
Therefore, we often require the $L_p$-norm of perturbation to be smaller than a threshold $\epsilon$. It is expressed as $\|\boldsymbol{x}^{adv}-\boldsymbol{x}^{real} \|_p \leq \epsilon$, where $\epsilon$ is the budget of adversarial perturbation. With $J$ denoted as the loss function, \eg{ cross entropy loss}, the goal for untargeted attacks\footnote{In this paper, we focus on the untargeted attacks. The attack methods can be easily extended to the targeted attacks.} is to maximize the loss $J(\boldsymbol{x}^{adv}, y)$, which is expressed as
\begin{equation}
    \label{eq:1}
    \mathop{\arg \max} \limits_{\boldsymbol{x}^{adv}} J(\boldsymbol{x}^{adv}, y), \ s.t. \ \|\boldsymbol{x}^{adv}-\boldsymbol{x}^{real} \|_{p} \ \leq \epsilon.
\end{equation}

Next, we introduce some typical adversarial attacks and defenses.

\subsection{White-box Attack}
\label{sec:2.1}
A white-box attack can fully access the target models. One of the most important white-box attacks is gradient-based.
FGSM~\cite{goodfellow2014explaining} is a common gradient-based attack algorithm, which proves that the linear features of deep neural networks in high-dimensional space are sufficient to generate adversarial examples. 
It performs a one-step update as
\begin{equation}
    \label{eq:0}
    \boldsymbol{x}^{adv} = \boldsymbol{x}^{real} + \epsilon \cdot \mathrm{sign}(\nabla_xJ(\boldsymbol{x}^{real}, y)), 
\end{equation}
where $\nabla_xJ(\boldsymbol{x}, y)$ is the gradient of the loss function with respect to $\boldsymbol{x}$, $\epsilon$ is the threshold of the adversarial perturbation, and $\mathrm{sign}(\cdot)$ is the sign function.
PGD~\cite{madry2017towards} extends FGSM to an iterative version. It iteratively applies gradient updates with a small step size for multiple times, and clips the adversarial examples at the end of each step as 
\begin{equation}
    \boldsymbol{x}^{adv}_{t+1} = \Pi_{\mathcal{B}_p({x}, \epsilon)} \left(\boldsymbol{x}^{adv}_{t} + \alpha \cdot  \mathrm{sign}\left(\nabla_xJ(\boldsymbol{x}^{adv}_{t}, y)\right)\right), 
\end{equation}
where  $\Pi$ is the projection operation; $\mathcal{B}_p({x}, \epsilon)$ is the $L_p$ ball centered at $\boldsymbol{x}$ with radius $\epsilon$; and $\alpha$ is the step size.

The optimization-based attacks aim to generate adversarial examples with minimum perturbation. {Deepfool}~\cite{moosavi2016deepfool} is an iterative attack method based on the idea of hyper-plane classification.
In each iteration, the algorithm adds a small perturbation to the image, gradually making the image cross the classification boundary, until the image is misclassified. The final perturbation is the accumulation of perturbations for each iteration. {Carlini \& Wagner's method (C\&W)}~\cite{carlini2017towards} is a powerful
optimization-based method. It takes a Lagrangian
form and adopts Adam~\cite{kingma2014adam} for optimization, which is written as
\begin{equation}
    \mathop{\arg \min} \limits_{\boldsymbol{x}^{adv}} \|\boldsymbol{x}^{adv}-\boldsymbol{x}^{real} \|_{p}-c\cdot J(\boldsymbol{x}^{adv}, y).
\end{equation}
C\&W is a very effective white-box attack method, but it lacks transferability to black-box models.

\subsection{Black-box Attack}
Black-box attacks cannot access the parameters and gradients of the target model, and can generally be divided into transfer-based, scored-based and decision-based attacks. 

Transfer-based attacks generate adversarial examples with a source model, then transfer it to the target model with the adversarial transferability~\cite{papernot2016practical} of the adversarial examples. 
MIM~\cite{dong2018boosting} improve the transferability by integrating a momentum term into the generation of adversarial examples. DIM~\cite{xie2019improving} proposes to improve the transferability of adversarial examples by increasing the diversity of input. It applies random resizing and padding with a given probability to the inputs at each attack iteration, and feeds the outputs to the network for the gradient calculation.
To further improve the transferability on some defense models, Dong \etal~\cite{dong2019evading} proposed Translation-Invariant Attacks (TI). This method reduces the computational complexity by convolving untranslated gradient maps with a pre-defined kernel.

Score-based attacks can only access the output scores of the target model for each input. The attacks under this setting estimate the gradient of the target model with gradient-free methods through a set of queries. NES~\cite{ilyas2018black} and SPSA~\cite{uesato2018adversarial} use sampling methods to completely approximate the true gradient. Prior-guided Random Gradient-free (P-RGF)~\cite{cheng2019improving} improves the accuracy of estimating the gradient with a transfer-based prior. $\mathcal{N}$ATTACK\cite{li2019nattack} learns a probability density distribution centered around the input, and samples from the distribution to generate adversarial examples. 

Decision-based attacks are more challenging since the attacker can only acquire the discrete hard-label predictions of the target model. Decision-based attacks such as Boundry~\cite{brendel2017decision} attack and Evolutionary~\cite{dong2019efficient} attack also play an important role in black-box attacks. 

\subsection{Defense Methods}
A large variety of adversarial defense methods have been proposed to resist the increasing threat of adversarial attacks. One of the important ways is to transform the input before feeding it to the network, to reduce the influence of the adversarial perturbation; such methods include JPEG Compression~\cite{dziugaite2016study}, Bit-depth Reduction~\cite{xu2017feature}, and denoising methods with auto-encoder or other generative models~\cite{liao2018defense, samangouei2018defense}. Randomization-based defenses introduce randomness to the networks to mitigate the effect of adversarial perturbation. Previous works mostly added randomness to the input~\cite{xie2017mitigating} or the model~\cite{engstrom2019exploring}. Adversarial training~\cite{madry2017towards, tramer2017ensemble, kannan2018adversarial, zhang2019theoretically} is another popular defense method, which expands adversarial examples into training data to make the networks more robust against the adversarial perturbation. Certified defenses~\cite{raghunathan2018certified, zhang2019defending} provide a certificate that guarantees the robustness of defense models under some threat models, and play an increasingly important role in defense methods.

\section{Methodology}
\label{sec:3}

In this section, we give a detailed description of our proposed affine-invariant gradient estimator. In Sec.~\ref{sec:3.1}, we formulate our method as maximizing the expectation of adversarial loss for affine transformation, which is decomposed into translation and scaling-rotation transformations. In Sec.~\ref{sec:3.2}, we show how to estimate the gradient of the loss function in the convolution form. In Sec.~\ref{sec:3.3}, we formulate the solution of kernel matrices in our estimator. In Sec.~\ref{sec:3.4}, we show the attack algorithms of our method.

\subsection{Problem Formulation}
\label{sec:3.1}

In order to generate more robust adversarial examples, we propose an affine-invariant attack method, which optimizes the $x^{adv}$ to maximize the expectation of adversarial loss in the preset affine transformation space domain as
\begin{gather}
\label{eq:2}
    \mathop{\arg \max} \limits_{\boldsymbol{x}^{adv}} \mathbb{E}_{a \sim \mathbb{A}}[J(\mathcal{F}_a(\boldsymbol{x}^{adv}), y)],\;
   s.t. \ \|\boldsymbol{x}^{adv}-\boldsymbol{x}^{real} \|_{\infty} \ \leq \epsilon,
\end{gather}
where $a$ is the random variable to affine transformation; $\mathbb{A}$ is the probability distribution of $a$; and $\mathcal{F}_a(\cdot)$ is the transformation function of $a$, which returns the transformed image.

Considering subtle camera movement in a long distance, we can approximately decompose an affine transformation as translation, rotation, and uniform scaling transformations, while ignoring shear and flip in our method. Therefore, for any 2-D affine transformation matrix $\boldsymbol{M}_a$, we have:

\begin{equation}
\label{eq:3}
\begin{split}
    \boldsymbol{M}_a
    &= \begin{bmatrix}
    1 & 0 & m      \\
    0 & 1 & n \\
    0 & 0 & 1
    \end{bmatrix}
    \begin{bmatrix}
    s\cdot \cos\theta & -s\cdot \sin\theta & 0      \\
    s\cdot \sin\theta & s\cdot \cos\theta & 0 \\
    0 & 0 & 1
    \end{bmatrix} \\
    &= \boldsymbol{M}_t \cdot \boldsymbol{M}_q,
\end{split}
\end{equation}
where $\theta$ is the rotation angle; $s$ is the scaling factor; $m$ is the translation length in the x-axis, $n$ is the translation length in the y-axis; $t$ and $q$ are the random variables of the decomposed translation and scaling-rotation transformations; and $\boldsymbol{M}_t$ and $\boldsymbol{M}_q$ are the transformation matrices of $t$ and $q$. According to Eq.~\eqref{eq:3}, the affine transformation function $\mathcal{F}_a(\boldsymbol{x})$ should be a composition of both the translation function and the scaling-rotation function, which means:
\begin{equation}
\begin{split}
    \mathcal{F}_a(\boldsymbol{x})=\mathcal{F}_t(\mathcal{F}_q(\boldsymbol{x}))=\mathcal{F}_{t,q}(\boldsymbol{x}),
\end{split}
\end{equation}
where $\mathcal{F}_t$ is the translation function of $t$; $\mathcal{F}_q$ is the scaling-rotation function of $q$;  and $\mathcal{F}_{t,q}$ is the composition function of $\mathcal{F}_t$ and $\mathcal{F}_q$. The decomposition process is shown in Fig.~\ref{fig:2}.

\begin{figure}[tp]
  \centering
  \includegraphics[width=\columnwidth]{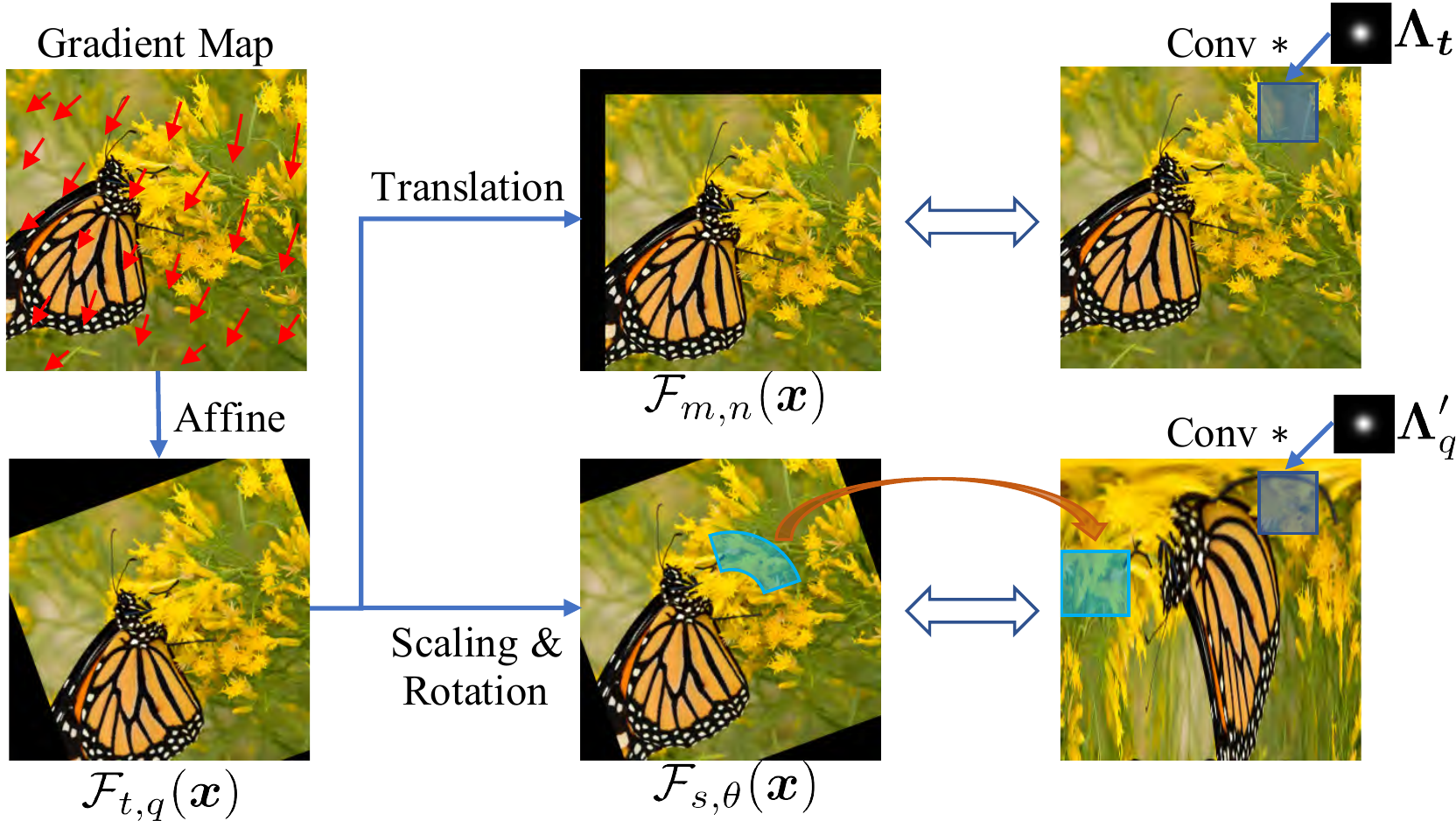} 
  \caption{In order to obtain an affine invariance property, we need to calculate the expectation of the gradient map obtained under different affine transformations. An affine transformation can be decomposed as translation and scaling-rotation transformations. A translation calculation can be accelerated by a convolution operation referring to ~\cite{dong2019evading}. A scaling-rotation calculation is equivalent to convolution in corresponding polar space.} 
  \vspace{-1ex}
  \label{fig:2} 
\end{figure}

In this way, the affine transformation in the optimization problem is decomposed into two simple transformations as:
\begin{equation}
    \mathop{\arg \max} \limits_{\boldsymbol{x}^{adv}} \mathbb{E}[ J(\mathcal{F}_{t,q}(\boldsymbol{x}^{adv}), y)], \;
  s.t. \ \|\boldsymbol{x}^{adv}-\boldsymbol{x}^{real} \|_{p} \ \leq \epsilon.
\label{optimal}
\end{equation}
In order to obtain the optimal solution, we need to calculate the expectation of the gradient in Eq.~\eqref{optimal}.

\subsection{Gradient Calculation}
\label{sec:3.2}
In this section, we provide a detailed calculation of the objective gradient, which is the core of gradient-based attacks. The gradient of weighted loss to the input image $\boldsymbol{x}$ is expressed as:
\begin{equation}
\label{eq:loss}
\begin{split}
    G_{\hat{\boldsymbol{x}}} &= \nabla_{\boldsymbol{x}} \mathbb{E}[J(\mathcal{F}_{t,q}(\boldsymbol{x}), y)] \Big|_{\boldsymbol{x}=\hat{\boldsymbol{x}}} \\
    &= \mathbb{E}[\nabla_{\mathcal{F}_{t,q}(\boldsymbol{x})} J(\mathcal{F}_{t,q}(\boldsymbol{x}), y) \frac{\partial \mathcal{F}_{t,q}(\boldsymbol{x})}{\partial \boldsymbol{x}}] \Big|_{\boldsymbol{x}=\hat{\boldsymbol{x}}} \\
    &= \mathbb{E}[\mathcal{F}_{t,q}^{-1} (\nabla_{\boldsymbol{x}} J(\boldsymbol{x}, y) \Big|_{\boldsymbol{x}=\mathcal{F}_{t,q}(\hat{\boldsymbol{x}})} )],
\end{split}
\end{equation}
where $\mathcal{F}_{t,q}(\boldsymbol{x})$ is replaced with $\boldsymbol{x}$ in the final step.

In order to analyze the gradient in terms of the original images, we introduce two assumptions. The first one is that $J(\boldsymbol{x},y)$ satisfies the gradient Lipschitz condition, which means smoothness of the gradient function $\nabla_{\boldsymbol{x}} J(\boldsymbol{x},y)$. The second one is that the weighted sum of the distance between the transformed images and the original one is upper-bounded. With these two assumptions, we can approximate the gradient of the transformed image, by basing that approximation on the gradient of the original. We provide a detailed analysis of the gradient approximation error in Sec.~\ref{sec:4}. Accordingly, $G_{\hat{\boldsymbol{x}}}$ is simplified as:
\begin{equation}
\label{eq:simplified}
    G_{\hat{\boldsymbol{x}}} \approx \mathbb{E}[\mathcal{F}_{t,q}^{-1} (\nabla_{\boldsymbol{x}} J(\boldsymbol{x}, y) \Big|_{\boldsymbol{x}=\hat{\boldsymbol{x}}} )] \triangleq \hat{G}_{\hat{\boldsymbol{x}}},
\end{equation}
where the approximated expectation is denoted as $\hat{G}_{\hat{\boldsymbol{x}}}$.

For convenience in the following steps, we assume that $q$ and $t$ are two independent random variables, which can be represented with detailed parameters $(s,\theta)$ and $(m,n)$ separately. Then, the gradient is expressed as:
\begin{equation}
\label{eq:8}
\begin{split}
    \hat{G}_{\hat{\boldsymbol{x}}}
    &= \mathbb{E}[\mathcal{F}_{q}^{-1}(\mathcal{F}_{t}^{-1}(\nabla_{\boldsymbol{x}} J(\boldsymbol{x},y) \Big|_{\boldsymbol{x}=\hat{\boldsymbol{x}}}) )] \\
    &= \mathbb{E}_{(s,\theta) \sim \mathbb{Q}}[\mathcal{F}_{s,\theta}^{-1} (\mathbb{E}_{(m,n) \sim \mathbb{T}}(\mathcal{F}_{m,n}^{-1}(\mathcal{G}_{\hat{\boldsymbol{x}}})))] \\
    &\triangleq \mathcal{Q} \comp \mathcal{T} (\mathcal{G}_{\hat{\boldsymbol{x}}})),
\end{split}
\end{equation}
where $q$ is split into $(s,\theta)$, and $t$ is split into $(m,n)$. In the final equation, we denote $\nabla_{\boldsymbol{x}} J(\boldsymbol{x},y) |_{\boldsymbol{x}=\hat{\boldsymbol{x}}}$ as $\mathcal{G}_{\hat{\boldsymbol{x}}}$, and the two expectations functions as $\mathcal{Q}$ and $\mathcal{T}$.

In actual implementation, sampling a series of transformed images for gradient calculation is a feasible but inefficient method~\cite{athalye2018synthesizing}. In our method, we discretize the four random variables $(s,\theta,m,n) = \{(s_i, \theta_i, m_i, n_i) \mid i \in \mathbb{Z}\}$ to simplify the calculation of Eq.~\eqref{eq:8}. The overall framework of affine transformation decomposition and equivalence of the decomposed transformations is shown in Fig.~\ref{fig:2}. In the following part, we provide the equivalent convolution forms.

\subsubsection {\bf {Equivalence of Translation}}
Since images are discrete 2D grids, to simplify the calculation we discretize the translation into pixel-wise shifts. Furthermore, we can set the value of $m$ and $n$ to be the moving step size in basic directions. Referring to~\cite{dong2019evading}, the translation part can be equivalent to convolving the gradient with a kernel composed of all weights as:
\begin{equation}
\label{eq:9}
\begin{split}
    \mathcal{T}(\boldsymbol{x})
    = \sum\limits_{m,n}p(m,n) \mathcal{F}_{{-}m, {-}n}(\boldsymbol{x}) 
    \Leftrightarrow \boldsymbol{\Lambda_t} * \boldsymbol{x},
\end{split}
\end{equation}
where $m$ and $n$ also represent the shifting step size in two basic directions; $p(m,n)$ is the probability function of the translation transformation; and $\boldsymbol{\Lambda}_t$ is the equivalent  translation kernel matrix. Taking a finite number of translation transformations, \ie, $m \in \{-k_1, \cdots, 0, \cdots, k_1\}$ and $n \in \{-k_2, \cdots, 0, \cdots, k_2\}$, the size of $\boldsymbol{\Lambda}_t$ is  $(2k_1+1)\times(2k_2+1)$, with $\boldsymbol{\Lambda}_{t_{m, n}} = p(-m,-n)$. 

\subsubsection{\bf {Equivalence of Rotation and Scaling}}
For rotation and scaling, normal convolution operation does not work to simplify the calculations. However, scaling can be linearized and approximated as radial shifts in polar space, when it comes to subtle transformation. We can project the original gradient image into polar space, as shown in Fig.~\ref{fig:2}. Then, rotation and scaling can be approximated as translation in the polar space. We replace the scaling factor $s$ with a radial shift distance $r$, such that $\mathcal{Q}(x)$ is expressed as:
\begin{equation}
\label{eq:11}
\begin{split}
    \mathcal{Q}(\boldsymbol{x}) 
    &\approx \mathcal{P}^{-1} \sum_{i,j} \mathcal{P} (p(r_i,\theta_j) \mathcal{F}_{r_i,\theta_j}^{-1}(\boldsymbol{x})) \\
    &\Leftrightarrow \mathcal{P}^{-1} \sum_{i,j} p'(u_i,v_j) \mathcal{F}_{u_i,v_j}^{-1}(\mathcal{P}(\boldsymbol{x})),
\end{split}
\end{equation}
where $\mathcal{P}(\cdot)$ and $\mathcal{P}^{-1}(\cdot)$ are the polar transformation and inverse polar transformation, while $u$ and $v$ are the corresponding random variables in polar space. 
Similarly, by discretizing the translation in polar space into pixel-wise shifts, we can get:
\begin{equation}
     \mathcal{Q}(\boldsymbol{x}) \Leftrightarrow \mathcal{P}^{-1} \left(\boldsymbol{\Lambda'_q} * \mathcal{P}(\boldsymbol{x}) \right), 
\end{equation}
where $\boldsymbol{\Lambda'_q}$ is the translation kernel matrix of size $(2l_1+1)\times(2l_2+1)$ in polar space.

Finally, the total gradient calculation is equivalent to some simple operations such that:
\begin{equation}
    G_{\hat{\boldsymbol{x}}} \approx \hat{G}_{\hat{\boldsymbol{x}}} \Leftrightarrow \mathcal{P}^{-1}(\boldsymbol{\Lambda}_q' * \mathcal{P}(\boldsymbol{\Lambda}_t * \mathcal{G}_{\hat{x}})),
\end{equation}
where $\mathcal{G}_{\hat{x}}$ is first convolved with a translation kernel $\boldsymbol{\Lambda}_t$, then convolved with another translation kernel $\boldsymbol{\Lambda}_q'$ in polar space.

\subsection{Kernel Matrix}
\label{sec:3.3}
For the translation part, we set the translation step size in a limited range, with $m \in \{-k_1, \cdots, 0, \cdots, k_1\}$ and $n \in \{-k_2, \cdots, 0, \cdots, k_2\}$. To ensure the attack performance on the untransformed images, we follow the basic principle that the more the input image changes, the lower the weight that should be assigned. Therefore, we empirically set matrix $\boldsymbol{\Tilde{\Lambda}}_t$ to follow Gaussian distribution, \ie $\boldsymbol{\Tilde{\Lambda}}_{t_{m,n}} = (2 \pi \sigma_{t_1}\sigma_{t_2})^{-1} {\exp}\{-(m^2+n^2)(2\sigma_{t_1}\sigma_{t_2})^{-1}\}$, where $\sigma_{t_1}=k_1/\sqrt{3}$ and $\sigma_{t_2}=k_2/\sqrt{3}$.
To ensure that the convolved gradients are at the same level as the original ones, we need to set $\boldsymbol{\Lambda}_t$ as a normalized matrix:
\begin{equation}
\label{eq:14}
    \boldsymbol{\Lambda}_t = \boldsymbol{\Tilde{\Lambda}}_t/ \| \boldsymbol{\Tilde{\Lambda}}_t \|_1.
\end{equation}

As for the rotation and scaling parts, we can define the kernel matrix directly in polar space just as with the translation part. Following the same principle, we set $\boldsymbol{\Tilde{\Lambda}}'_{q_{i,j}} = (2 \pi \sigma_{u} \sigma_{v})^{-1} {\exp}\{-(i^2+j^2)(2\sigma_{u} \sigma_{v})^{-1}\}$, where $i \in \{-l_1, \cdots, 0, \cdots, l_1\}$, $j \in \{-l_2, \cdots, 0, \cdots, l_2\}$ and $\sigma_{u} = l_1/\sqrt{3}$, $\sigma_{v} = l_2/\sqrt{3}$. Then the kernel is defined as:
\begin{equation}
\label{eq:15}
    \boldsymbol{\Lambda}'_q = \boldsymbol{\Tilde{\Lambda}}'_q/ \| \boldsymbol{\Tilde{\Lambda}}'_q \|_1.
\end{equation}
The normalized convolution kernels $\boldsymbol{\Lambda}'_q$ in Eq.~\eqref{eq:14} and $\boldsymbol{\Lambda}_t$ in Eq.~\eqref{eq:15} are used to enhance the affine invariance of the gradient map.

\subsection{Attack Algorithms}
\label{sec:3.4}
\begin{algorithm}[tp]
\caption{AI-PGD}
\label{alg:A}
\begin{algorithmic}[1]
\renewcommand{\algorithmicrequire}{\textbf{Input:}}
\renewcommand{\algorithmicensure}{\textbf{Output:}}
\REQUIRE A classifier $f$ with $J$ as its loss function; a natural image $\boldsymbol{x}$ and its true label $y$.
\REQUIRE The size of perturbation $\epsilon$; total iterations $T$; step size $\alpha$; two predefined convolutional kernel $\boldsymbol{\Lambda}_t$ and $\boldsymbol{\Lambda}'_q$.
\ENSURE The corresponding affine-invariant adversarial example $\boldsymbol{x}^{adv}$.
\STATE {Generate random initial noise $\boldsymbol{x}^{init}$}
\STATE {$\boldsymbol{x}^{adv}_{0} \gets \boldsymbol{x}+\boldsymbol{x}^{init}$}
\FOR{$t=0$ to $T-1$}
\STATE {Feed $\boldsymbol{x}^{adv}_{t}$ to $f$ and calculate the corresponding gradient
\begin{center}
$\mathcal{G}_{\boldsymbol{x}^{adv}_{t}} \gets \nabla_{\boldsymbol{x}} J(\boldsymbol{x}, y) \Big|_{\boldsymbol{x}=\boldsymbol{x}^{adv}_{t}}$
\end{center}
}
\STATE {Enhance the gradient with the affine-invariant gradient estimator as:
\begin{center}
$G_{\boldsymbol{x}^{adv}_{t}} \gets \mathcal{P}^{-1}(\boldsymbol{\Lambda}_q' * \mathcal{P}(\boldsymbol{\Lambda}_t * \mathcal{G}_{\boldsymbol{x}^{adv}_{t}}))$
\end{center}
}
\STATE {Update $\boldsymbol{x}^{adv}_{t+1}$ by applying the sign of estimated gradient and projection operation as:\\
\begin{center}
    $\boldsymbol{x}^{adv}_{t+1} \gets \Pi_{\mathcal{B}_p({x}, \epsilon)} \left(\boldsymbol{x}^{adv}_{t} + \alpha \cdot \  \mathrm{sign}(G_{\boldsymbol{x}^{adv}_{t}})\right)$
\end{center}
}
\ENDFOR
\RETURN $\boldsymbol{x}^{adv} \gets \boldsymbol{x}^{adv}_{T}$
\end{algorithmic}
\end{algorithm}

In Sec.~\ref{sec:3.2} and Sec.~\ref{sec:3.3}, we show how to calculate the gradient and corresponding kernel matrix. Here, we introduce the updating strategy of our attack method for generating adversarial examples. Essentially, our method is also related to the gradient, so it can be easily integrated into other gradient-based attack methods introduced in Sec.~\ref{sec:2}, such as FGSM~\cite{goodfellow2014explaining}, PGD~\cite{madry2017towards}, \etc. For gradient-based attack methods such as PGD, we need to calculate the gradient $\nabla_xJ(\boldsymbol{x}^{adv}_t, y)$ of the current solution $\boldsymbol{x}^{adv}_t$ in each step. In our method, however, we just need to replace the normal gradient with the result of $G_{\hat{\boldsymbol{x}}}$ obtained by our proposed affine-invariant gradient estimator in Sec.~\ref{sec:3.2}. 

For example, when combined with one-step methods such as FGSM~\cite{goodfellow2014explaining} (AI-FGSM), the updating strategy is written as:
\begin{equation}
\label{eq:111}
    \boldsymbol{x}^{adv} = \boldsymbol{x}^{real} +\epsilon \cdot \mathrm{sign}(G_{\boldsymbol{x}^{real}}).
\end{equation}
When combined with the iterative methods such as PGD~\cite{madry2017towards} (AI-PGD), the updating strategy is written as:
\begin{equation}
\label{eq:12}
    \boldsymbol{x}^{adv}_{t+1} =  \Pi_{\mathcal{B}_p({x}, \epsilon)} \left(\boldsymbol{x}^{adv}_{t} + \alpha \cdot \  \mathrm{sign}(G_{\boldsymbol{x}^{adv}_{t}})\right).
\end{equation}
The detailed algorithm of AI-PGD is summarized in Algorithm~\ref{alg:A}.
Our method can be similarly integrated into other gradient-based attack methods such as MIM~\cite{dong2018boosting} and DIM~\cite{xie2019improving} as AI-MIM and AI-DIM.

\section{Analysis of Gradient Approximation Error}
\label{sec:4}
In Sec.~\ref{sec:3.2}, we introduce a gradient approximation to simplify the gradient calculation in Eq.~\eqref{eq:simplified} as:
\begin{equation}
\label{eq:4.1}
    \mathbb{E}[ \mathcal{F}_{t,q}^{-1} (\nabla_{\boldsymbol{x}} J(\boldsymbol{x}, y)  \Big|_{\boldsymbol{x}=\mathcal{F}_{t,q}(\hat{\boldsymbol{x}})} )]
    \approx  \mathbb{E}[ \mathcal{F}_{t,q}^{-1} (\nabla_{\boldsymbol{x}} J(\boldsymbol{x}, y) \Big|_{\boldsymbol{x}=\hat{\boldsymbol{x}}} )].
\end{equation}
In this section, we give a detailed analysis of the gradient approximation error to show the rationality of using gradient approximation in our method. 

We let $\boldsymbol{g}_1 = \mathbb{E}[ \mathcal{F}_{t,q}^{-1} (\nabla_{\boldsymbol{x}} J(\boldsymbol{x}, y) \Big|_{\boldsymbol{x}=\hat{\boldsymbol{x}}} )]$ and $\boldsymbol{g}_2  = \mathbb{E}[ \mathcal{F}_{t,q}^{-1} (\nabla_{\boldsymbol{x}} J(\boldsymbol{x}, y) \Big|_{\boldsymbol{x}=\mathcal{F}_{t,q}(\hat{\boldsymbol{x}})} )]$ for convenience. Furthermore, we introduce the two main assumptions used in our analysis.

DNNs introduce ReLU activation function into the structure, so the loss function $J(\boldsymbol{x}, y)$ of the neural networks does not satisfy the Lipschitz condition. However, recent studies~\cite{allen2019convergence,gao2019convergence} have shown that the optimization landscape is almost-convex and semi-smooth with over-parameterized neural networks, showing the semi-smoothness of the loss function. Therefore, we can strengthen this conclusion into the first assumption.

\begin{assumption}
    The loss function $J(\boldsymbol{x}, y)$ satisfies the gradient Lipschitz condition
        , which means smoothness of the gradient function $\nabla_x J(\boldsymbol{x}, y)$. A subtle affine transformation will not affect its smoothness, which is expressed as:
    \begin{gather}
        \left \|\mathcal{F}_{t,q}^{-1}(\nabla_x J(\boldsymbol{x}_2, y)) - \mathcal{F}_{t,q}^{-1}(\nabla_x J(\boldsymbol{x}_1, y)) \right\|_2 \\ \nonumber
        \leqslant c_1 \left\|\boldsymbol{x}_2 -\boldsymbol{x}_1 \right\|_2, 
    \end{gather}
    where $c_1$ is a positive constant.
\end{assumption}

For natural images, the values of two adjacent pixels are usually continuous and gradual. Therefore, when the affine transformation we perform on the image is smaller, the Euclidean distance between the transformed image and the original image is also smaller. Now we can lay out our second assumption.
\begin{assumption}
    The expectation of the distance between the affine transformed image and the original one is upper-bounded as:
    \begin{equation}
        \mathbb{E}[  \left \|\mathcal{F}_{t,q} (\boldsymbol{x}) - \boldsymbol{x} \right \|_2 ]\leqslant c_2,
    \end{equation}
    where $c_2$ is a positive constant.
\end{assumption}

With the two assumptions, we can get Corollary~\ref{corollary:1} that the Euclidean distance between $\boldsymbol{g}_1$ and $\boldsymbol{g}_2$ is upper-bounded. The detailed proof is shown in Appendix~A.
\begin{corollary}
\label{corollary:1}
    The Euclidean distance between $\boldsymbol{g}_1$ and $\boldsymbol{g}_2$ is upper-bounded as:
    \begin{equation}
        \left\|\boldsymbol{g}_2-\boldsymbol{g}_1 \right\|_2 \leqslant c_1 \cdot c_2.
    \end{equation}
\end{corollary}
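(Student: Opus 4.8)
## Proof Proposal

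The plan is to bound $\left\|\boldsymbol{g}_2 - \boldsymbol{g}_1\right\|_2$ by pushing the norm inside the expectation and then applying the two assumptions in sequence. First I would write the difference as a single expectation:
\begin{equation}
\boldsymbol{g}_2 - \boldsymbol{g}_1 = \mathbb{E}\left[\mathcal{F}_{t,q}^{-1}\bigl(\nabla_{\boldsymbol{x}} J(\boldsymbol{x},y)\big|_{\boldsymbol{x}=\mathcal{F}_{t,q}(\hat{\boldsymbol{x}})}\bigr) - \mathcal{F}_{t,q}^{-1}\bigl(\nabla_{\boldsymbol{x}} J(\boldsymbol{x},y)\big|_{\boldsymbol{x}=\hat{\boldsymbol{x}}}\bigr)\right].
\end{equation}
Then by Jensen's inequality (convexity of the norm), $\left\|\boldsymbol{g}_2 - \boldsymbol{g}_1\right\|_2 \leq \mathbb{E}\bigl[\left\|\mathcal{F}_{t,q}^{-1}(\nabla_{\boldsymbol{x}} J(\boldsymbol{x},y)|_{\boldsymbol{x}=\mathcal{F}_{t,q}(\hat{\boldsymbol{x}})}) - \mathcal{F}_{t,q}^{-1}(\nabla_{\boldsymbol{x}} J(\boldsymbol{x},y)|_{\boldsymbol{x}=\hat{\boldsymbol{x}}})\right\|_2\bigr]$.

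Next I would apply Assumption~1 pointwise inside the expectation, taking $\boldsymbol{x}_2 = \mathcal{F}_{t,q}(\hat{\boldsymbol{x}})$ and $\boldsymbol{x}_1 = \hat{\boldsymbol{x}}$. This gives the integrand bound $c_1 \left\|\mathcal{F}_{t,q}(\hat{\boldsymbol{x}}) - \hat{\boldsymbol{x}}\right\|_2$, so that $\left\|\boldsymbol{g}_2 - \boldsymbol{g}_1\right\|_2 \leq c_1\, \mathbb{E}\bigl[\left\|\mathcal{F}_{t,q}(\hat{\boldsymbol{x}}) - \hat{\boldsymbol{x}}\right\|_2\bigr]$. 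Finally, Assumption~2 (applied at $\boldsymbol{x} = \hat{\boldsymbol{x}}$) bounds this expectation by $c_2$, yielding $\left\|\boldsymbol{g}_2 - \boldsymbol{g}_1\right\|_2 \leq c_1 c_2$, which is the claim.

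The step I expect to be the main obstacle — or at least the one requiring the most care — is the interchange of expectation and norm together with justifying that Assumption~1 may be invoked inside the expectation with the \emph{same} constant $c_1$ uniformly over the realizations of $(t,q)$. Strictly speaking, Assumption~1 as stated already bakes in the composition with $\mathcal{F}_{t,q}^{-1}$ and asserts a uniform constant, so this is mostly a matter of presenting the argument cleanly; but one should note that the randomness in $t,q$ appears both in the transformation $\mathcal{F}_{t,q}$ and in its inverse, and the two assumptions must be read as holding for the relevant joint distribution $\mathbb{A}$. A secondary subtlety is that $\nabla_{\boldsymbol{x}} J$ need not literally be Lipschitz for a ReLU network, which is why the preamble to the corollary appeals to the semi-smoothness results of~\cite{allen2019convergence,gao2019convergence} to motivate Assumption~1 as a working hypothesis rather than a theorem; the proof itself then proceeds purely formally from the two stated assumptions.
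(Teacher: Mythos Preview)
Your proposal is correct and follows exactly the same route as the paper: write the difference as a single expectation, apply Jensen's inequality for the norm, invoke Assumption~1 pointwise to get $c_1\,\mathbb{E}[\|\mathcal{F}_{t,q}(\hat{\boldsymbol{x}})-\hat{\boldsymbol{x}}\|_2]$, and finish with Assumption~2. The subtleties you flag about the uniform constant and the ReLU non-smoothness are handled in the paper precisely as you anticipate---by stating them as assumptions rather than proving them.
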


Now we get an upper bound of the approximation error. Since the gradients used for adversarial example generation in Sec.~\ref{sec:3} would be normalized, we also care about the directions between $\boldsymbol{g}_1$ and $\boldsymbol{g}_2$ except distance. Then we analyze the cosine similarity between them with another assumption. 
\begin{assumption}
The norms of $\boldsymbol{g}_1$ and $\boldsymbol{g}_2$ are larger than a positive constant $c_3$ as
\begin{equation}
    \label{eq:4.6}
    \left \| \boldsymbol{g}_1 \right \|_2 \geqslant c_3;\ \ \ \ \  \left \| \boldsymbol{g}_2 \right \|_2 \geqslant c_3.
\end{equation}
\end{assumption}
Assumption~3 can be satisfied when the model does not cause gradient vanishing, otherwise the adversarial examples cannot be generated since the gradients are zero. We then can analyze the difference of directions between $\boldsymbol{g}_1$ and $\boldsymbol{g}_2$ by their cosine similarity. We finally give the following corollary. The detailed proof is shown in Appendix~A.
\begin{corollary}
\label{corollary:2}
    The cosine similarity of $\boldsymbol{g}_1$ and $\boldsymbol{g}_2$ is lower-bounded as: 
    \begin{equation}
        cossim(\boldsymbol{g}_1, \boldsymbol{g}_2) \geqslant 1 - \frac{(c_1 c_2)^2}{2c_3^2},
    \end{equation}
    where $cossim(\cdot, \cdot)$ is the cosine similarity function.
\end{corollary}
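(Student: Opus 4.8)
The plan is to reduce the cosine-similarity bound to the distance bound already established in Corollary~\ref{corollary:1} together with the norm lower bounds of Assumption~3, using only elementary vector identities. First I would expand the squared Euclidean distance by the polarization identity
\begin{equation}
    \left\|\boldsymbol{g}_2 - \boldsymbol{g}_1\right\|_2^2 = \left\|\boldsymbol{g}_1\right\|_2^2 + \left\|\boldsymbol{g}_2\right\|_2^2 - 2\langle \boldsymbol{g}_1, \boldsymbol{g}_2\rangle,
\end{equation}
and solve for the inner product, so that
\begin{equation}
    cossim(\boldsymbol{g}_1, \boldsymbol{g}_2) = \frac{\langle \boldsymbol{g}_1, \boldsymbol{g}_2\rangle}{\left\|\boldsymbol{g}_1\right\|_2 \left\|\boldsymbol{g}_2\right\|_2} = \frac{\left\|\boldsymbol{g}_1\right\|_2^2 + \left\|\boldsymbol{g}_2\right\|_2^2 - \left\|\boldsymbol{g}_2 - \boldsymbol{g}_1\right\|_2^2}{2\left\|\boldsymbol{g}_1\right\|_2 \left\|\boldsymbol{g}_2\right\|_2}.
\end{equation}

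Next I would split this fraction into two pieces. By the AM--GM inequality $\left\|\boldsymbol{g}_1\right\|_2^2 + \left\|\boldsymbol{g}_2\right\|_2^2 \geqslant 2\left\|\boldsymbol{g}_1\right\|_2\left\|\boldsymbol{g}_2\right\|_2$, the first piece is at least $1$, hence
\begin{equation}
    cossim(\boldsymbol{g}_1, \boldsymbol{g}_2) \geqslant 1 - \frac{\left\|\boldsymbol{g}_2 - \boldsymbol{g}_1\right\|_2^2}{2\left\|\boldsymbol{g}_1\right\|_2 \left\|\boldsymbol{g}_2\right\|_2}.
\end{equation}
Finally I would substitute the upper bound $\left\|\boldsymbol{g}_2 - \boldsymbol{g}_1\right\|_2 \leqslant c_1 c_2$ from Corollary~\ref{corollary:1} into the numerator and the lower bound $\left\|\boldsymbol{g}_1\right\|_2 \left\|\boldsymbol{g}_2\right\|_2 \geqslant c_3^2$, which follows from Assumption~3, into the denominator of the subtracted term, obtaining exactly $cossim(\boldsymbol{g}_1, \boldsymbol{g}_2) \geqslant 1 - (c_1 c_2)^2/(2 c_3^2)$.

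There is no genuine obstacle here: every step is a one-line estimate. The only point requiring a little care is the \emph{direction} in which the two bounds are applied — the distance bound must go into the numerator (which enters with a minus sign) and the norm bound into the denominator of that same negative term, since applying either the other way around would break the inequality. I would also remark that the bound is informative only in the regime $c_1 c_2 < \sqrt{2}\,c_3$, and is otherwise vacuous, which is consistent with the ``subtle transformation'' assumption used throughout; this matches the intuition that small affine perturbations leave the gradient direction essentially unchanged.
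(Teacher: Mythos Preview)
Your argument is correct and, in fact, slightly more streamlined than the paper's. The paper proceeds by normalizing, writing $cossim(\boldsymbol{g}_1,\boldsymbol{g}_2)=1-\tfrac{1}{2}\|\tilde{\boldsymbol{g}}_2-\tilde{\boldsymbol{g}}_1\|_2^2$ with $\tilde{\boldsymbol{g}}_i=\boldsymbol{g}_i/\|\boldsymbol{g}_i\|_2$, then assuming without loss of generality $\|\boldsymbol{g}_2\|_2\geqslant\|\boldsymbol{g}_1\|_2$ and invoking the geometric inequality $\|\tilde{\boldsymbol{g}}_2-\tilde{\boldsymbol{g}}_1\|_2^2\leqslant\|\boldsymbol{g}_2-\boldsymbol{g}_1\|_2^2/\|\boldsymbol{g}_1\|_2^2$ (true, but not entirely obvious at a glance) before substituting the bounds from Corollary~\ref{corollary:1} and Assumption~3. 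Your route via the polarization identity and AM--GM avoids the case split and that auxiliary inequality altogether, and even yields the marginally sharper intermediate estimate $cossim\geqslant 1-\|\boldsymbol{g}_2-\boldsymbol{g}_1\|_2^2/(2\|\boldsymbol{g}_1\|_2\|\boldsymbol{g}_2\|_2)$ before passing to the constants. Both collapse to the same final bound, so the difference is purely one of presentation; yours is the cleaner of the two.
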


In our method, we use Gaussian kernels, which can cause a relatively small $c_2$. With Corollary~\ref{corollary:1} and Corollary~\ref{corollary:2}, we can approximate the gradient calculation as Eq.~\eqref{eq:4.1} with a small approximation error.

\renewcommand{\arraystretch}{1.0} 
\begin{table*}[tp]  
  
  \centering  
  \caption{The ASRs (\%) of adversarial attacks under different rotations. The adversarial examples are crafted for Inc-v3 using FGSM, PGD, MIM, DIM and their extensions AI-FGSM, AI-PGD, AI-MIM, AI-DIM. We set the scaling factor and translation offset to be $s=0,m=0,n=0$. We test the performance on three models---Inc-v3, Inc-v4 and Ens-AT.}
  \fontsize{8.5}{10}\selectfont  

  \setlength{\tabcolsep}{0.9mm}{
    \begin{tabular}{c|c|c|c|c|c|c|c|c|c|c|c|c|c|c|c}  
    \hline  
    \multirow{2}*{Method}&  
    \multicolumn{3}{c|}{$\theta=-30 ^{\comp}$}&\multicolumn{3}{c|}{ $\theta=-15^{\comp}$}&\multicolumn{3}{c|}{$\theta=0 ^{\comp}$}&\multicolumn{3}{c|}{ $\theta=15 ^{\comp}$}&\multicolumn{3}{c}{ $\theta=30 ^{\comp}$}\cr\cline{2-16}  
    &Inc-v3&Inc-v4&Ens-AT&Inc-v3&Inc-v4&Ens-AT&Inc-v3&Inc-v4&Ens-AT&Inc-v3&Inc-v4&Ens-AT&Inc-v3&Inc-v4&Ens-AT\cr  
    \hline  
    \hline  
    FGSM&72.6&65.8&38.6 &71.3&57.3&34.7 &{\bf 79.9}&35.6&12.1 &70.0&57.2&34.9 &71.3&65.7&42.0 \cr  
    AI-FGSM&{\bf 83.0}&{\bf 81.0}&{\bf73.1} &{\bf 78.0}&{\bf 73.4}&{\bf59.7} &63.2&{\bf 44.0}&{\bf23.6} &{\bf 77.5}&{\bf 72.9}&{\bf58.9} &{\bf 79.9}&{\bf 80.7}& {\bf63.9}\cr  
    \hline
    PGD&60.0&37.1& 20.4&75.2&30.8& 14.2&{\bf100}&24.8&6.3 &73.1&31.5&15.0 &56.8&36.8&17.6 \cr  
    AI-PGD&{\bf 85.2}&{\bf 76.3}&{\bf 58.1 }&{\bf 90.3}&{\bf 66.3}&{\bf49.0 }&\bf100&{\bf 40.9}&{\bf 18.9} &{\bf 88.7}&{\bf 66.8}&{\bf49.9 }&{\bf 84.6}&{\bf 74.3}&{\bf54.6} \cr  
    \hline
    MIM&84.2&69.8&48.2 &92.4&63.7&36.7 &{\bf100}&50.5&16.3 &91.1&65.4&38.0 &83.1&69.4&43.1 \cr    
    AI-MIM&{\bf 92.4}&{\bf 85.8}& {\bf70.5} &{\bf 96.0}&{\bf 82.4}& {\bf64.3} &{\bf100}&{\bf 59.0}& {\bf33.3}&{\bf 95.7}&{\bf 80.8}&{\bf65.0} &{\bf 90.6}&{\bf 86.6}&{\bf70.8} \cr  
    \hline
    DIM&{\bf 94.9}&{ 82.8}&{50.3  }&{ 97.3}&{ 79.6}&{46.0 }&\bf100&{\bf75.4}&21.8 &97.7&78.8&46.8 &94.6&83.1&47.5 \cr  
    AI-DIM&{\bf 94.9}&{\bf 88.8}&{\bf 76.9 }&{\bf 97.5}&{\bf 86.0}&{\bf 71.8 }&{ 99.9}&72.6&\bf 39.2 &{\bf 98.0}&{\bf 86.7}& \bf 70.2 &{\bf 95.3}&{\bf 90.6}&\bf 74.6 \cr 
    \hline  
    EOT & 82.1 & 52.0 & 10.6 & 82.2 & 50.7 & 9.1 & 82.3 & 50.3 & 10.2 & 82.5 & 51.4 & 10.3 & 82.4 & 51.6 & 9.6\cr
    \hline  
    \end{tabular}  
    }
    \vspace{-1ex}
    \label{tab:rotation}  
\end{table*} 

\renewcommand{\arraystretch}{1.0} 
\begin{table*}[tp]  
  
  \centering  
  \caption{The ASRs (\%) of adversarial attacks under different scalings. The adversarial examples are crafted for Inc-v3 using FGSM, PGD, MIM, DIM, AI-FGSM, AI-PGD, AI-MIM, and AI-DIM, respectively. We set the rotation angle and translation offset to be $\theta=30^{\comp},m=20,n=20$. We test the performance on Inc-v3, Inc-v4 and Ens-AT.}
  \fontsize{8.5}{10}\selectfont  
  \setlength{\tabcolsep}{0.9mm}{
    \begin{tabular}{c|c|c|c|c|c|c|c|c|c|c|c|c|c|c|c}
    \hline
    \multirow{2}*{Method}&  
    \multicolumn{3}{c|}{$s=0.5$}&\multicolumn{3}{c|}{ $s=0.7$}&\multicolumn{3}{c|}{$s=1.0$}&\multicolumn{3}{c|}{ $s=1.3$}&\multicolumn{3}{c}{ $s=1.5$}\cr\cline{2-16}  
    &Inc-v3&Inc-v4&Ens-AT&Inc-v3&Inc-v4&Ens-AT&Inc-v3&Inc-v4&Ens-AT&Inc-v3&Inc-v4&Ens-AT&Inc-v3&Inc-v4&Ens-AT\cr
    \hline
    \hline
    FGSM&72.0&66.4&55.2 &75.4 &70.9 &43.1 &{71.7 }&66.3 &45.8 &59.8 &51.6 &37.9 &56.3 &51.7 &41.5 \cr
    AI-FGSM&{\bf 87.3}&{\bf 87.7}&{\bf 64.2} &{\bf 87.0}&{\bf 87.6}&{\bf 63.6} &\bf 81.0 &{\bf 80.6}&{\bf 65.9} &{\bf 69.2}&{\bf 68.9}&{\bf 56.1} &{\bf 64.6}&{\bf 59.3}& {\bf 52.7}\cr
    \hline
    PGD&38.7&28.1 & 26.9&44.0 &34.3 &22.4 &{ 58.4}& 36.7&26.4 &50.6 &31.1&20.7  &41.7 &26.5 &23.1 \cr
    AI-PGD&{\bf 82.0}&{\bf 83.4}&{\bf 62.7}&{\bf 85.8}&{\bf 82.9}&{\bf 58.8}&\bf 85.1&{\bf 73.6}&{\bf 57.1} &{\bf 73.6}&{\bf 59.6}&{\bf 48.1}&{\bf 61.5}&{\bf 51.6}&{\bf 44.2} \cr
    \hline
    MIM& 66.0 & 60.0 &44.8 &75.6 &72.0 &40.3 &{83.5 }&69.4 &46.8 &74.3 &59.8 &42.6 &67.9 &56.9&44.6  \cr
    AI-MIM&{\bf 86.7}&{\bf 90.2}& {\bf 67.2} &{\bf 91.7}&{\bf 91.2}& {\bf 68.7} &{\bf 90.7}&{\bf 86.3}& {\bf 72.1}&{\bf 83.4}&{\bf 75.5}&{\bf 64.7} &{\bf 75.7}&{\bf 68.4}&{\bf 61.1} \cr
    \hline
    DIM&{70.7}&{ 64.7}&{44.8 }&{83.2 }&{78.1}&{44.1}&94.7&{82.9}&51.2 &\bf92.1&76.6 &54.4 &85.5 &70.3 &57.7  \cr  
    AI-DIM&{\bf 90.0}&{\bf 91.1}&{\bf 67.2}&{\bf 93.0}&{\bf 93.1}&{\bf 70.9}&{\bf 95.6}&\bf90.7&\bf75.2 &{ 90.6}&{\bf 82.6}& \bf 71.3 &{\bf 85.5}&{\bf 77.9}&\bf 69.1 \cr 
    \hline  
    EOT & 52.6 & 36.2 & 16.8 & 74.6 & 47.0 & 14.4 & 82.4 & 51.6 & 9.6 & 83.6 & 62.1 & 24.9 & 81.9 & 61.4 & 30.9\cr
    \hline  
    \end{tabular}  
    }
    \label{tab:scaling}  
    \vspace{-2ex}
\end{table*} 

\section{Experiments}
\label{sec:5}
In this section, we introduce our experiments and prove the effectiveness of our method. In Sec.~\ref{sec:5.1}, we introduce the experimental settings. We then test the affine invariance and efficiency of our methods compared to some basic attacks and EOT\cite{athalye2018synthesizing} in Sec.~\ref{sec:5.2}. In Sec.~\ref{sec:5.2.4}, we further verify the robustness of our method to more complex transformations in the physical world. In Sec.~\ref{sec:5.3}, we verify the robustness of our methods to defense models under black-box settings. Next, we provide an ablation study for our methods in Sec.~\ref{sec:5.4}. Finally, we include a short discussion about the affine-invariance and transferability of adversarial examples in Sec.~\ref{sec:5.5}.

\subsection{Experimental Settings}
\label{sec:5.1}


We first design experiments to show the improvement of our proposed attacks on affine invariance in the digital world, and then further introduce physical conditions to verify the robustness of our method to affine transformations. Finally, we demonstrate that our approach can also improve the transferability of adversarial attacks to defense models. Below are some details of the experimental setup.

\textbf{Dataset and Models.}
We use an ImageNet-compatible dataset\footnote{\url{https://github.com/cleverhans-lab/cleverhans/tree/master/cleverhans_v3.1.0/examples/nips17_adversarial_competition/dataset}} comprised of the 1,000 images that were used in the NeurIPS 2017 adversarial competition. For models, we choose four naturally trained models and six defense models according to the RealSafe platform~\cite{dong2020benchmarking}. These models are naturally trained Inception v3 (Inc-v3)~\cite{szegedy2016rethinking}; Inception v4 (Inc-v4)~\cite{szegedy2016inception}; Inception ResNet v2 (IncRes-v2)~\cite{szegedy2016inception} and ResNet v2-152 (Res-v2-152)~\cite{he2016identity}; Ensemble Adversarial Training (Ens-AT)~\cite{tramer2017ensemble}; Adversarial Logit Pairing (ALP)~\cite{kannan2018adversarial}; JPEG Compression~\cite{dziugaite2016study}; Bit-depth Reduction (Bit-Red)~\cite{xu2017feature}; Random Resizing and Padding (R\&P)~\cite{xie2017mitigating}; and RandMix~\cite{zhang2019defending}. Furthermore, we use Inc-v3 as the backbone model for defenses based on input transformations such as JPEG and Bit-Red.



\textbf{Evaluation Metrics.} We use the attack success rate as the evaluation metrics referring to RealSafe~\cite{dong2020benchmarking}. The attack success rate of an untargeted attack on the classifier $f$ is defined as:
\begin{equation}
    \mathrm{ASR}(\mathcal{A}_{\epsilon,p}, f)=\frac{1}{M} \sum \limits _{i=1}^N \mathbf{1} (f(\boldsymbol{x_i'}) = y_i \wedge f(\mathcal{A}_{\epsilon,p}(\boldsymbol{x_i'})) \neq y_i),
\end{equation}
where $\{ \boldsymbol{x_i'}, y_i\}^N_{i=1}$ is the test set; $x_i' = \mathcal{F}_{a}(x_i)$; $\mathbf{1}(\cdot)$ is the indicator function; $\mathcal{A}_{\epsilon,p}$ means the attack method that generates the adversarial examples with perturbation budget $\epsilon$ under the $L_p$ norm; and $M=\sum _{i=1}^N \mathbf{1}(f(\boldsymbol{x_i'}) = y_i)$. 

\textbf{Hyper-parameters.} We set the maximum perturbation to be $\epsilon=16$ with pixel value $\in [0,255]$. For all iterative methods, we set the number of iteration steps to be $10$. For methods with momentum, we use the decay factor $\mu=1.0$. For methods related to DIM~\cite{xie2019improving}, we set the transformation probability as $0.7$. For EOT~\cite{athalye2018synthesizing}, the number of samples and optimization steps are both set to be 50.
In order to avoid influencing the performance of the attacks on untransformed images, we only consider affine transformations within a narrow range, with the settings $\theta \in [-30^{\comp}, 30^{\comp}]$, $s \in [0.5, 1.5]$, $m,n \in [-20, 20]$. Also, we set the kernel size of $\boldsymbol{\Lambda}_t$, $\boldsymbol{\Lambda}'_q$ to $(15\times15)$ and $(15\times15)$.


\begin{figure*}[htbp]
  \centering
  \subfigure[]{\includegraphics[width=0.45\textwidth]{./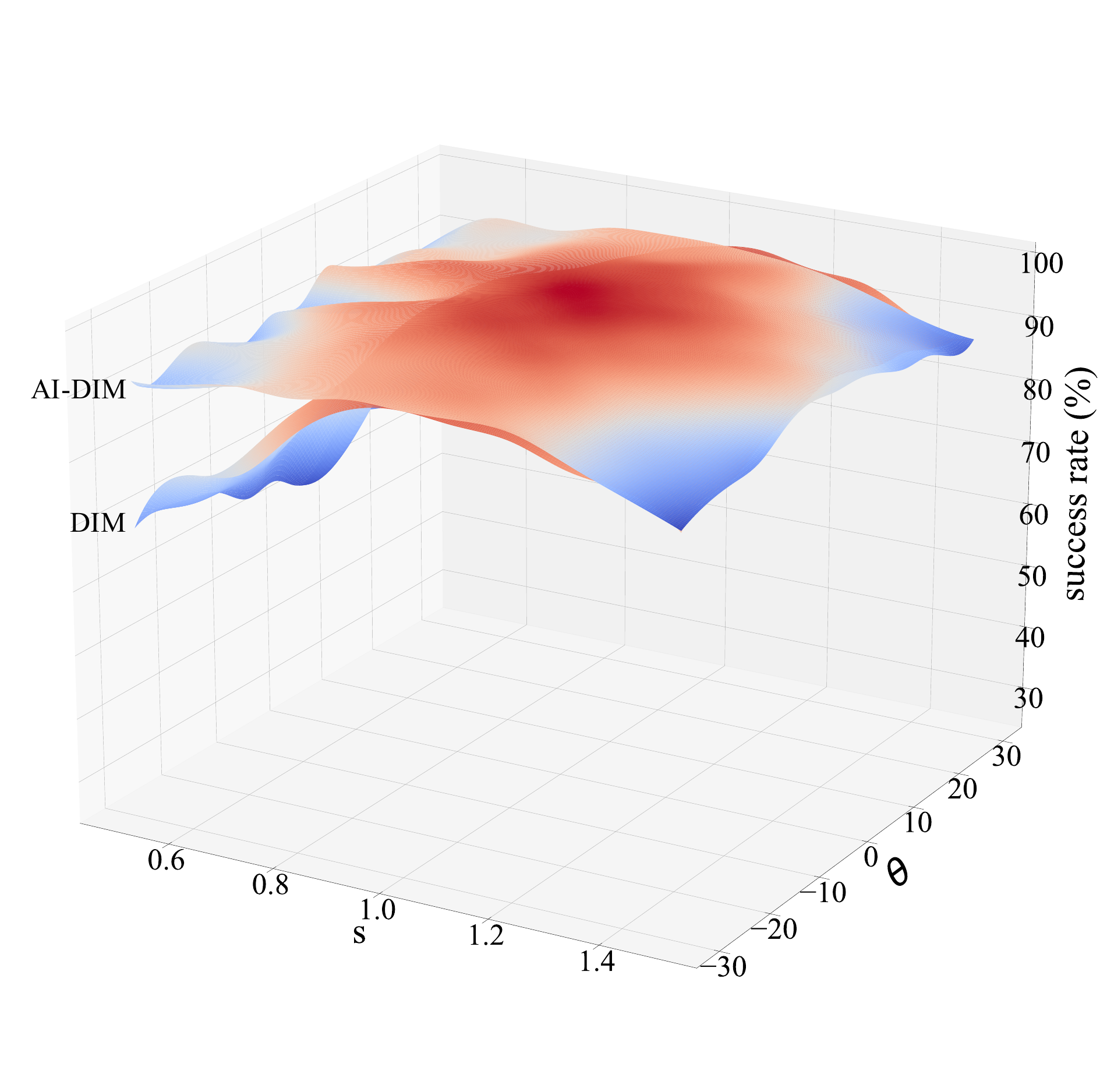} }
  \subfigure[]{\includegraphics[width=0.5\textwidth]{./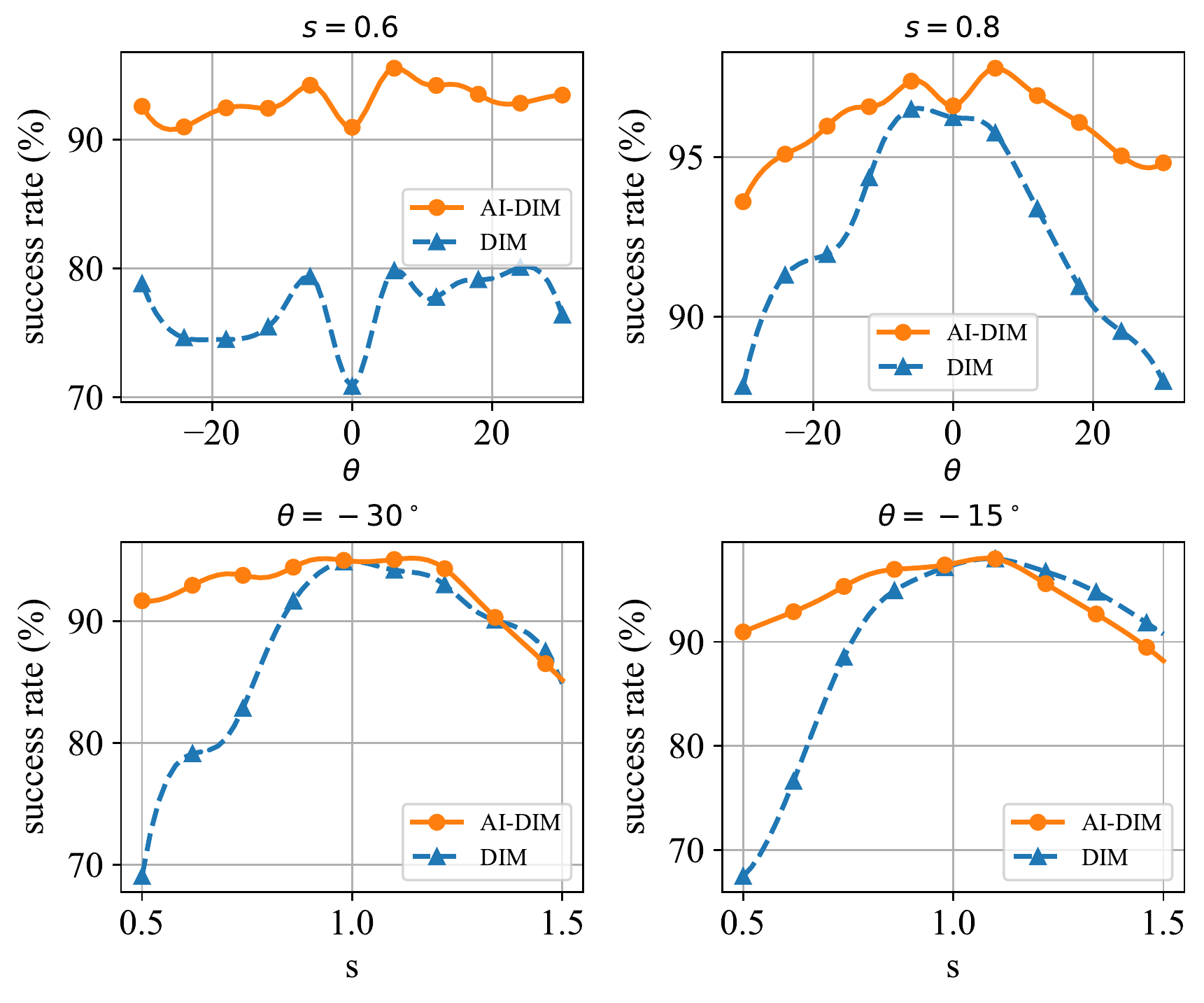} }
  \caption{The ASRs (\%) of adversarial attacks against Inc-v3 under different rotations and scalings. The adversarial examples are generated for Inc-v3 using DIM and AI-DIM. Fig. (a) shows the results in the form of a 3-D figure. Fig. (b) shows four randomly selected profiles of Fig. (a), which are $s=0.6$, $s=0.8$, $\theta=-30^{\comp}$ and $\theta=-15^{\comp}$.} 
  \label{fig:4} 
\end{figure*}

\subsection{Robustness to Affine Transformation}
\label{sec:5.2}
In this section, we show the experimental results of the proposed affine-invariant method over different affine transformations.
We selected FGSM, PGD, MIM and DIM as the basic attacks, and their extensions combined with our method are named with an "AI-" prefix. EOT is also considered as a baseline of transform-based attacks. We choose Inc-v3 as the white-box model, and test the performance on Inc-v3 (white-box model), Inc-v4 (black-box model) and Ens-AT (defense model), respectively. We separately study the ASRs in three kinds of transformations: rotation, scaling and translation.

\renewcommand{\arraystretch}{0.95}
\begin{table}[tp]
  
  \centering  
  \caption{The ASRs (\%) of adversarial attacks under different translations. The adversarial examples are crafted for Inc-v3 using the following nine attacks. We set $\theta=25^{\comp}$ and $s=0.7$, and test the performance on Inc-v3, Inc-v4 and Ens-AT.}
  \fontsize{8.}{10}\selectfont  
  \setlength{\tabcolsep}{1mm}{
    \begin{tabular}{c|c|c|c|c|c|c}  
    \hline  
    \multirow{2}{*}{Method}&  
    \multicolumn{3}{c|}{$m=5, n=5$}&\multicolumn{3}{c}{ $m=20, n=20$}\cr\cline{2-7}
    &Inc-v3&Inc-v4&Ens-AT&Inc-v3&Inc-v4&Ens-AT\cr
    \hline
    \hline
    FGSM&76.1&71.1&45.1 &76.7&70.9& 45.0\cr
  AI-FGSM&\bf87.0&\bf87.8&\bf66.0 &\bf 88.1& \bf87.4& \bf69.6\cr\hline
    PGD&43.8&32.8 &17.1 &46.6 &33.7 &21.4 \cr 
    AI-PGD&\bf85.5&\bf84.7 &\bf61.0 &\bf83.2 &\bf84.6 &\bf65.8 \cr\hline
    MIM&76.6 &72.0 &38.7 &79.3 &71.8 &41.9 \cr 
    AI-MIM&\bf91.6 &\bf90.7 &\bf73.3 &\bf91.7 &\bf90.5 & \bf70.9\cr\hline
    DIM&{85.2}&{78.1}&{43.2 }&{85.5 }&{76.8 }&{44.4}\cr
    AI-DIM&{\bf 94.1}&{\bf 92.4}&{\bf 73.7}&{\bf 93.5}&{\bf 92.2 }&{\bf 74.4}\cr
    \hline
    EOT & 74.3 & 50.9 & 14.6 & 73.4 & 51.2 & 16.9\cr
    \hline  
    \end{tabular}  
    \vspace{-1ex}
    }
    \label{tab:translation}  
\end{table} 
\renewcommand{\arraystretch}{0.95}
\begin{table}[tp]
  
  \centering  
  \caption{The average ASRs (\%) of AI-DIM and EOT and the cost of time(s) to generate 1,000 adversarial examples. The adversarial examples are crafted for Inc-v3 using the following two attacks.}
  \fontsize{8.}{10}\selectfont  
  \setlength{\tabcolsep}{2mm}{
    \begin{tabular}{c|c|c|c|c}  
    \hline  
    \multirow{2}{*}{Method}&  
    \multicolumn{3}{c|}{Avg ASRs(\%)} & \multirow{2}{*}{Avg Time(s)} \cr\cline{2-4}
    &Inc-v3&Inc-v4&Ens-AT \cr
    \hline
    \hline
    EOT&{77.9}&{51.4}& 14.8& 56580 \cr
    AI-DIM&{\bf 94.0}&{\bf 87.1}& \bf 69.5& \bf 566\cr
    \hline
    \end{tabular}  
    }
    \label{tab:eot}  
    \vspace{-2ex}
\end{table} 

\subsubsection{ Rotation}
As to rotation, we set $s=1,m=0,n=0$ and the rotation angle to be $\theta \in [-30^{\comp}, 30^{\comp}]$ at a step of $15^{\comp}$ to see the performance of different methods under different angles.
We report the test results in Tab.~\ref{tab:rotation}. In total, the ASRs increase significantly with the proposed method added to the basic models. 
Results on black-box models and defense models also demonstrate that our method is more transferable and resistant to defenses. In particular, our method brings the greatest improvement to PGD, which increases the ASR by $27\%$ on average. Furthermore, our best attack AI-DIM outperforms the EOT by a large margin, especially for black-box and defense models. For example, it improves the ASR by $57\%$ for the defense model compared to EOT. The results confirm the effectiveness of the specifically designed rotation-invariant kernel.


\begin{figure*}[htbp]
  \centering
\includegraphics[width=0.9\textwidth]{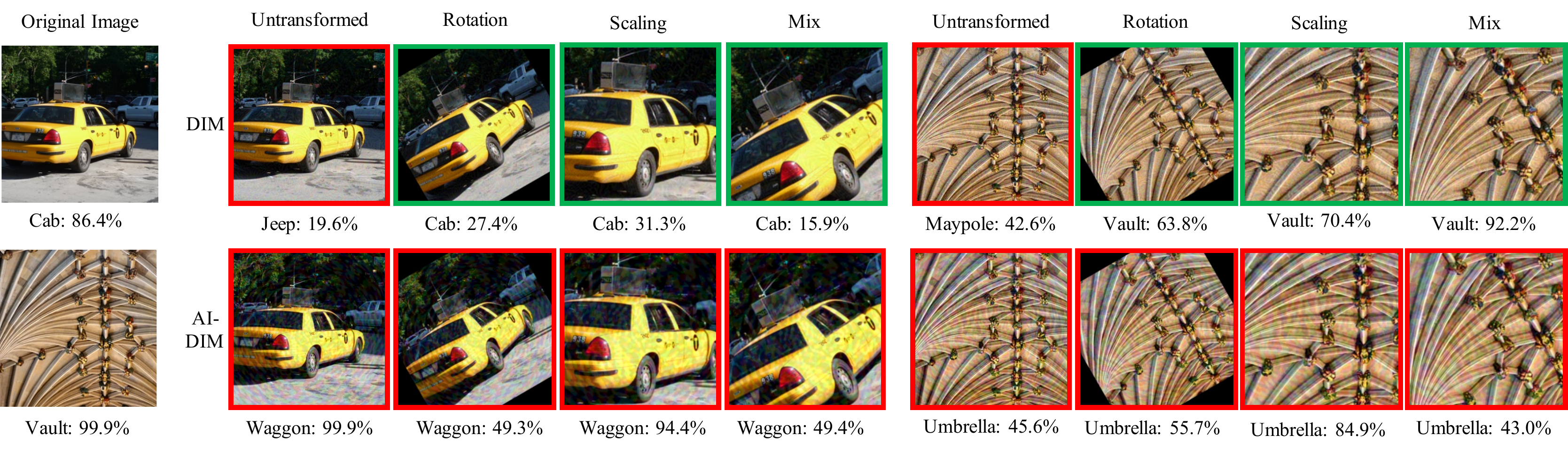}
  \caption{The adversarial examples are generated for Inc-v3 by DIM and AI-DIM with different transformations, including identity transform, rotation with $\theta=30^{\comp}$, scaling with $s = 1.5$, and mix transformation. The mix transformation consists of $\theta=30^{\comp}$, $s=1.5$ and 
  translation offsets $t=(20, 20)$. In the pictures, red represents a successful attack, and green represents a failed attack.}
  \label{fig:6} 
\end{figure*}

\subsubsection{ Scaling}
As to scaling, we perform a stress test to verify the stability of our method. We set the rotation angle and translation offsets in an extreme condition as $\theta=30^{\comp}$, $m=20$, $n=20$, and set the scaling factor as $s \in [0.5, 0.7, 1.0, 1.3, 1.5]$ to show the performance under different scales.
From the results in Tab.~\ref{tab:scaling}, we observe that our method still 
maintains better affine invariance under an extreme affine transformation, showing huge performance gains compared with the basic attacks and EOT.
Taking $s = 0.5$ in white-box attacks as an example, our method improves the ASRs by $15.3\%$, $43.3\%$, $20.7\%$ and $19.3\%$ for FGSM, PGD, MIM and DIM, respectively, and the best attack AI-DIM brings a $47.4\%$ performance gain compared to EOT. This demonstrates that our method improves the robustness to large-scale affine transformations.
The results also prove that the kernel we designed in the polar space is effective for achieving scaling invariance.

\subsubsection{ Translation}
As to translation, we randomly set the rotation angle and scaling factor as $\theta=25^{\comp}$, $s=0.7$. Due to the symmetry of translation, the translation offsets are set to positive numbers as $(m,n) \in [(5,5),(20,20)]$.
From the results in Tab.~\ref{tab:translation}, we find that different translation offsets have little effect on the attack performance. Nevertheless, 
our method still performs better than the basic attacks and EOT, demonstrating that the translation kernel we construct referring to~\cite{dong2019evading} also yields a good estimation of the gradient.

In summary, our best attack, AI-DIM, achieves an average ASR of $94.0\%$ against the white-box model, $87.1\%$ against the black-box model and $69.5\%$ against the defense model over the tested affine transformation domain. 
In order to further show the margin improved by our method, taking AI-DIM as the examples, we visualize the white-box attack-success-rate function with rotation angle and scaling factor as independent variables in Fig.~\ref{fig:4}. More results can be found in Appendix~B. We set the translation offsets both as 0, since they have little effect on the performance. 
Fig.~\ref{fig:4} show that our method keeps a relatively high attack success rate even under extreme  affine transformations, showing better affine invariance than the basic attacks. 
In addition, we compare the efficiency of our method with EOT in Tab.~\ref{tab:eot}. The experiment is conducted on a GTX 1080TI GPU. From the results, we conclude that our best method improves the attack success rate by $35.5\%$ and saves about $99\%$ on computation cost, compared to EOT.

We also visualize adversarial images generated for the Inc-v3 model by DIM, and AI-DIM with different transformations in Fig.~\ref{fig:6}, respectively. More adversarial images generated by FGSM, PGD, MIM and their corresponding combinations with our method are shown in Appendix~C. Due to transformation to polar space, we can see that the adversarial perturbations generated by our affine-invariant attacks exhibit circular patterns. Furthermore, the adversarial perturbations generated by our affine-invariant attacks are smoother than those generated by DIM, due to the smooth effect of kernel convolution.
We further show the predicted labels and probabilities for the images with different affine transformations, and the results show that the adversarial examples generated by our method are more robust to affine transformations.

\subsection{Robustness under Physical Condition}
\label{sec:5.2.4}

\begin{figure}[tp]
  \centering
  \includegraphics[width=\columnwidth]{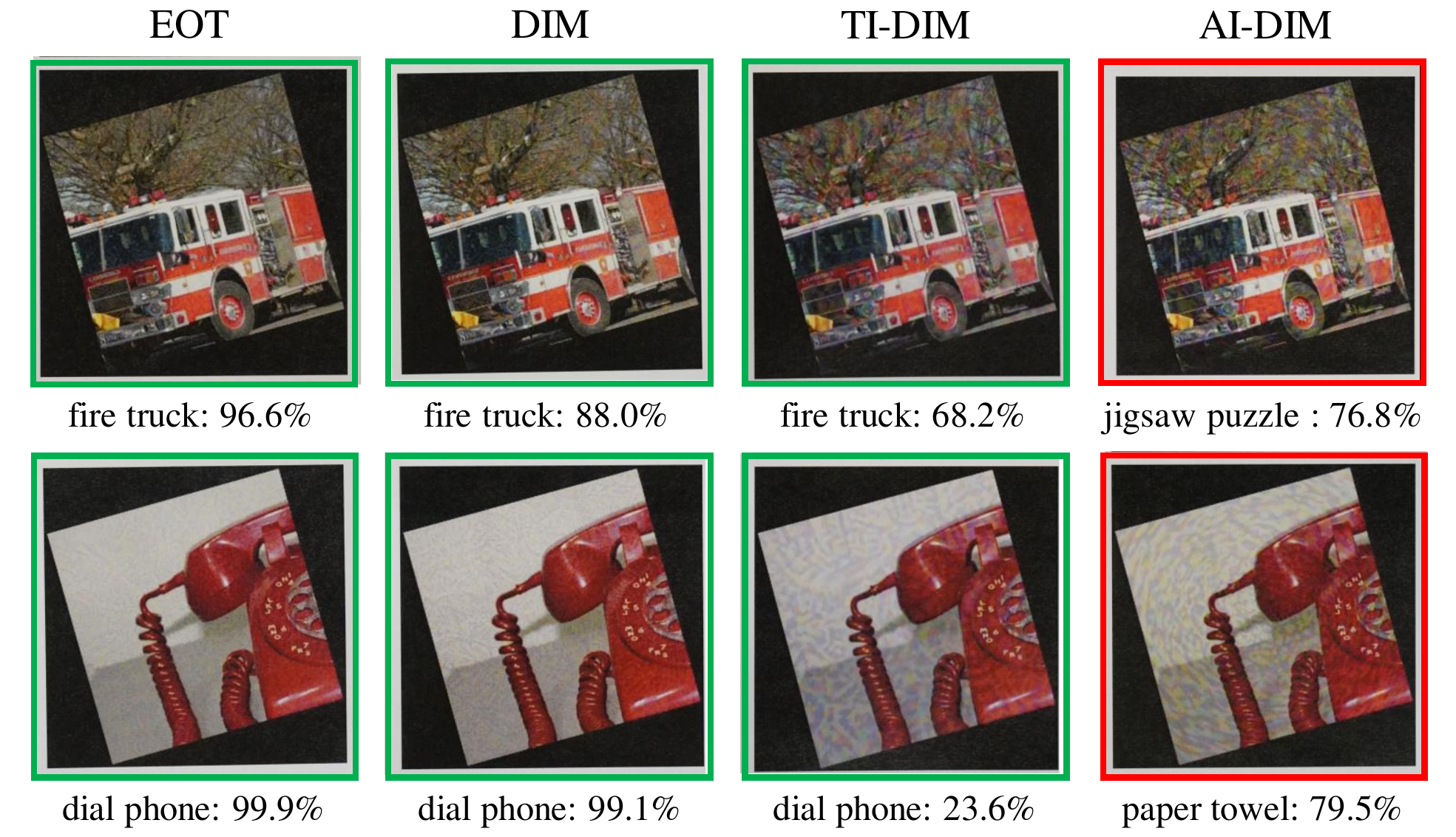} 
  \caption{The re-captured adversarial examples by camera for setting 3. These images are generated by EOT, DIM, TI-DIM and AI-DIM, respectively.} 
  \label{fig:physical_vis} 
\end{figure}

\renewcommand{\arraystretch}{1.0}
\begin{table}[tp]
  
  \centering  
  \caption{The recognition Acc. (\%) of Inc-v3 under different adversarial attacks. The adversarial examples are crafted for Inc-v3, and recaptured by camera after printing. }
  \vspace{1ex}
  \fontsize{8.5}{10}\selectfont  
  \setlength{\tabcolsep}{1mm}{
    \begin{tabular}{c|c|c|c}  
    \hline  
    Method & setting 1 & setting 2 & setting 3 \cr
    \hline
    \hline
    EOT&24.3 & 21.6 &18.0  \cr\hline
    DIM & 11.8 &12.7 &15.3  \cr\hline
    TI-DIM& 10.7 & 8.5 &7.6 \cr\hline
    AI-DIM&\bf 10.5 &\bf 5.5 &\bf 5.6 \cr
    \hline
    \end{tabular}  
    }
    \label{tab:physical_test}  
\end{table} 

\renewcommand{\arraystretch}{0.95} 
\begin{table*}[tp]  
  
  \centering  
  \caption{The ASRs (\%) of adversarial attacks against six defense models. The adversarial examples for single-model attacks and ensemble-based attacks are crafted for IncRes-v2 and the ensemble of Inc-v4, IncRes-v2, and Res-v2-152, respectively, using FGSM, PGD, MIM, DIM and their TI and AI extensions.
  }  
  \vspace{1ex}
  \fontsize{8.}{10}\selectfont  
  \setlength{\tabcolsep}{1.0mm}{
    \begin{tabular}{c|c|c|c|c|c|c|c|c|c|c|c|c}  
    \hline  
    \multirow{2}*{Method}&  
    \multicolumn{6}{c|}{Single-model attacks}&\multicolumn{6}{c}{ Ensemble-based attacks}\cr\cline{2-13}  
    &Ens-AT&ALP&JPEG&Bit-Red&R\&P&RandMix&Ens-AT&ALP&JPEG&Bit-Red&R\&P&RandMix\cr
    \hline
    \hline
    FGSM &16.1&6.4&37.6&30.0&48.7&35.0&27.1&11.1&49.1&41.8&58.7&40.7\cr
    TI-FGSM&{ 26.1}&21.6&46.9 &40.7&56.5&\bf 42.2 &{34.2}&22.8&\bf53.9 & 47.4&\bf60.2& \bf 44.8  \cr
    AI-FGSM&{\bf 31.7}&{\bf 30.7}&{\bf48.6} &{\bf 43.8}&{\bf 57.2}& { 42.0}&{\bf 34.9}&{\bf 31.7}&{\bf49.8} &{\bf 47.8}&{59.4}& {40.2} \cr
    \hline
    PGD&8.4&0.4&26.7 &19.4&36.6&12.7 &25.8&0.4&58.0 &32.8&66.8&23.7  \cr
    TI-PGD&18.8&9.7&29.3 &19.9&36.2&18.5 &41.4&10.1&56.0 &31.9&59.4&23.2  \cr
    AI-PGD&{\bf 31.2}&{\bf 21.8}&{\bf48.0} &{\bf 40.4}&{\bf 57.4}&{\bf35.2}&{\bf 54.6}&{\bf 27.6}&{\bf72.9} &{\bf 53.2}&{\bf 75.8}&{\bf41.5} \cr
    \hline
    MIM&34.5& 5.1 &60.3 &44.6&68.6&40.2 & 59.1&{ 10.3}&80.6 &62.8&83.4&51.8  \cr
    TI-MIM&46.7&{\bf33.7}&56.9 &45.5&61.6&38.0& 65.3&{\bf 35.2}&73.6 &56.1&76.4&42.7  \cr
    AI-MIM&{\bf 49.4}&{ 31.9}&{\bf65.8} &{\bf 52.0}&{\bf 70.1}&{\bf48.5}&{\bf 69.6}&{ 33.7}&{\bf81.4} &{\bf 63.9}&{\bf84.9}&{\bf54.3} \cr
    \hline
    DIM&{ 54.7}&{ 8.6}&{ 71.3 }&{ 58.1}&{ 77.1}&{\bf 50.0 }&{ 80.2}&{ 14.2}&{83.4 }&{ 69.8}&{ 83.8}&{52.5 } \cr  
    TI-DIM&{ 60.9}&{ 35.0}&{ 70.1 }&{ 53.3}&{ 72.9}&{ 43.2 }&{ 78.0}&{ 39.9}&{84.5 }&{ 65.8}&{ 85.8}&{51.3 } \cr  
    AI-DIM&{\bf 62.7}&{\bf 36.4}&{\bf 76.2}&{\bf 59.8}&{\bf 78.8}&{\bf 50.0 }&{\bf 82.1}&{\bf 42.6}&{\bf 89.1}&{\bf 72.2}&{\bf 90.7}&{\bf 60.7} \cr 
    \hline  
    \end{tabular}  
    }
  \label{tab:transferability}
\end{table*}

\renewcommand{\arraystretch}{0.95}
\begin{table*}[tp]
  \centering  
  \caption{The ASRs (\%) and average queries of score-based attacks with different initializations. The surrogate models for the initialization are the ensemble of Inc-v4, IncRes-v2 and  Res-v2-15. The selected defense models are Ens-AT, ALP, JPEG, Bit-Red, R$\&$P and RandMix.}
  \vspace{1ex}
  \fontsize{8.}{10}\selectfont  
  \setlength{\tabcolsep}{1.0mm}{
    \begin{tabular}{c|c|c|c|c|c|c|c|c|c|c|c|c}  
    \hline  
    \multirow{2}{*}{Method}&  
    \multicolumn{2}{c|}{Ens-AT}&\multicolumn{2}{c|}{ALP}&\multicolumn{2}{c|}{JPEG}&\multicolumn{2}{c|}{Bit-Red}&\multicolumn{2}{c|}{R$\&$P}&\multicolumn{2}{c}{RandMix}\cr\cline{2-13}
    & ASR & Avg Q & ASR & Avg Q & ASR & Avg Q& ASR & Avg Q & ASR & Avg Q & ASR & Avg Q\cr
    \hline
    \hline
    NES& 95.7 &1827& 82.7 &1425& 55.8 & 6126 & 96.8&1132 & 8.4& 4236& 1.5 & 3684 \cr
    NES-PGD& 97.3& 1045 & 82.3& 1377 & 90.9& 1688 & 98.5& 443& 65.2& 895& 47.0 & 2658 \cr
    NES-TI-DIM& 98.9& 315 &88.1 & 939 & 95.5& 670 & 99.2& 242& 86.2 & 499& 52.0& 1524 \cr 
    NES-AI-DIM&\bf 99.4&\bf 183 &\bf 88.5&\bf 937 &\bf 97.1&\bf 423 & \bf 99.7& \bf154 & \bf91.7 & \bf246 & \bf 60.7&\bf 1069 \cr
    \hline
    SPSA& 96.9 &1516& 80.5 &1556& 52.0 & 6001 & 96.8& 994& 8.8& 3735& 0.7& 3682 \cr
    SPSA-PGD& 97.8& 910 & 81.9& 1441 & 87.5& 1753 & 98.6& 414& 68.6& 863& 25.2& 2690\cr
    SPSA-TI-DIM& 99.3& 265 &88.1 & 1013 & 95.5&  637 & 99.2& 222 & 86.0& 476& 51.2& 1557\cr 
    SPSA-AI-DIM&\bf 99.6&\bf 167 &\bf 88.7&\bf 1006 &\bf 96.6&\bf 410 & \bf99.7& \bf140& \bf90.5& \bf223& \bf61.3& \bf1083\cr
    \hline
    $\mathcal{N}$ATTACK& 99.1 & 805 & 98.6 & 505& 97.5 & 1057 & 99.6 & 577& 31.9& 1529& 7.8& 2487\cr
    $\mathcal{N}$ATTACK-PGD& 99.7& 395 & 98.8& 450 & 97.8& 244& 99.7& 274& 73.6& 340& 27.0& 1699\cr
    $\mathcal{N}$ATTACK-TI-DIM& 99.8& 147 &\bf99.0 &\bf 250 & 99.3& 126 & 99.5& 152& 89.2& 271& 53.8& 1176\cr 
    $\mathcal{N}$ATTACK-AI-DIM&\bf 99.9&\bf 87 & 98.8& 277 &\bf 99.7&\bf 58 &\bf 99.9&\bf 97&\bf 93.0&\bf 131&\bf 63.0&\bf 838\cr
    \hline
    \end{tabular}  
    }
    \label{tab:score}  
\end{table*} 

To further exhibit the performance of our method under physical experiment condition, we print all the 1,000 adversarial images and obtain the affine-transformed test data by adjusting the camera position parameters \etc. 
With the disturbance of physical conditions such as lighting, the classifier's recognition accuracy for recaptured images will decrease a lot. Therefore, in this part, we will narrow down the transformation ranges such as limiting rotations within 15 degrees. We show the classifier's recognition accuracy with different attack methods and three transformation settings in Tab.~\ref{tab:physical_test}, which are denoted as setting 1: $\theta=0^{\comp}, s=1.0, m=0, n=0$, setting 2: $\theta=5^{\comp}, s=0.9, m=0, n=0$ and setting 3: $\theta=15^{\comp}, s=0.8, m=0, n=0$. Here we use the recognition accuracy as the performance indicator since $M$ in ASR will be further affected under physical condition, and recognition accuracy is more objective. Due to the inevitable deviations in the shooting process, there will be some random offsets in each transformations, such as translation offset from 0 to 20 pixels. In Fig.~\ref{fig:physical_vis}, we visualize some re-captured adversarial examples by camera for setting 3. The attack hyper-parameters are the same as the previous ones, and the source model and test model are both Inc-v3. Tab.~\ref{tab:physical_test} shows that even under physical condition, our method outperforms the rest three attacks, which further verifies the robustness of the proposed affine-invariant attacks to affine transformations.

\subsection{Robustness to Defense Models under Black-box Setting}
\label{sec:5.3}

TI~\cite{dong2019evading} has shown that it can improve the transferability of adversarial examples greatly with respect to the defense models. As shown in Sec.~\ref{sec:3}, the proposed affine-invariant method is an enhancement of the TI method. Therefore, we conduct an experiment to show the transferability of adversarial examples generated by different attacks against defense models. We test the performance of single-model attacks and ensemble-based attacks~\cite{dong2018boosting}, respectively. For single-model attacks, we set IncRes-v2 as the surrogate model to generate adversarial examples. As 
for ensemble-based attacks, we attack the ensemble of Inc-v4, IncRes-v2, and Res-v2-152 with equal ensemble weights. Furthermore, we choose six state-of-the-art defense models according to RealSafe~\cite{dong2020benchmarking}.

From Tab.~\ref{tab:transferability}, we can see that, compared with the TI method and basic attacks, our method yields a significant improvement for tested defense models. In particular, combined with PGD, MIM and DIM, our method improves the ASRs by 
$17.1\%$, $6.2\%$, and $5.1\%$, respectively,
on average for ensemble-based attacks, compared to the TI method. It demonstrates that the proposed affine-invariant attacks can better improve the tranferability of the generated adversarial examples to evade the defense models. The primary reason is that our method considers a wider transformation domain, and can generate adversarial examples that are less sensitive to the discriminative regions of the white-box model, helping to evade the defense models~\cite{dong2019evading}.

To further verify the transferability, we set our method as the initialization of some scored-based black-box attacks, and compare its performance with PGD, TI and the original attacks. We choose NES~\cite{ilyas2018black}, SPSA~\cite{uesato2018adversarial} and $\mathcal{N}$ATTACK~\cite{li2019nattack} as the score-based black-box attacks; Ens-AT, ALP, JPEG, Bit-Red, R$\&$P and RandMix as the defense models; and the ensemble of Inc-v4, IncRes-v2, Res-v2-15 as the surrogate models for the initialization. The maximum number of queries and magnitude are set to be 10,000 and 16. The results in Tab.~\ref{tab:score} demonstrate that our method not only increases the ASR, but also greatly reduces the required number of queries by up to $95\%$, an outcome that undoubtedly is meaningful to black-box attacks.

\renewcommand{\arraystretch}{0.9}
\begin{table*}[tp]
  \centering
  \vspace{2ex}
  \caption{The ASRs (\%) of AI-DIM with two different kernel types under different affine transformations against one white-box model (Inc-V3), one black-box model (Inc-V4) and six defense models (Ens-AT, ALP, JPEG, Bit-Red, R$\&$P and RandMix). The adversarial examples are crafted for Inc-V3.}
  \fontsize{7.5}{10}\selectfont  
  \setlength{\tabcolsep}{1.0mm}{
    \begin{tabular}{c|c|c|c|c|c|c|c|c|c|c|c|c|c|c|c|c}  
    \hline  
    \multirow{2}{*}{Transformation Sample}&  
    \multicolumn{2}{c|}{Inc-v3}&\multicolumn{2}{c|}{Inc-v4}&\multicolumn{2}{c|}{Ens-AT}&\multicolumn{2}{c|}{ALP}&\multicolumn{2}{c|}{JPEG}&\multicolumn{2}{c|}{Bit-Red}&\multicolumn{2}{c|}{R$\&$P}&\multicolumn{2}{c}{RandMix}\cr\cline{2-17}
    & Uni. & Gau. & Uni. & Gau.& Uni. & Gau. & Uni. & Gau. & Uni. & Gau.& Uni. & Gau. & Uni. & Gau. & Uni. & Gau.\cr
    \hline
    \hline
    $(\theta,s,m,n)=(0^{\comp},1.0,0,0)$& 91.9 & \bf99.8 &48.5 &\bf 69.3&\bf 52.3 & 39.2 &\bf 45.0 &38.1 & 88.8&\bf 99.1& 65.8 & \bf85.0 & 88.9 &\bf 98.6 &37.4 &\bf 56.8 \cr

    $(\theta,s,m,n)=(15^{\comp},1.2,5,5)$& 80.9 &\bf 95.8& 61.1&\bf 76.3 &\bf 67.0& 64.8 &\bf 61.6& 44.5& 79.6&\bf 93.5& 67.5 &\bf 77.3& 85.8 &\bf 96.1 &46.2 &\bf 62.6 \cr

    $(\theta,s,m,n)=(30^{\comp},1.5,20,20)$& 74.6 &\bf 87.5& 65.3 &\bf 79.0 &\bf 72.0& 71.3 &\bf 68.7& 56.4& 75.0 &\bf 85.8& 64.2 &\bf 75.1& 78.2&\bf 88.1 &49.2 &\bf 59.0 \cr

    $(\theta,s,m,n)=($-$10^{\comp},0.7,5,5)$& 90.7 &\bf 95.4 & 89.0 &\bf 91.7 &\bf 85.2& 73.6 & \bf 65.1& 57.1 & 86.9 & \bf 90.8 & 75.6&\bf 79.0 & 89.9&\bf 93.6 & 57.4&\bf 72.3 \cr
    
    $(\theta,s,m,n)=($-$20^{\comp},0.5,20,20)$&\bf 87.1 &\bf 87.1 & 89.6 &\bf 91.6 &\bf 81.9& 66.7 & \bf 46.7& \bf 46.7 & \bf 83.2 & 82.6 & 71.3 &\bf 72.1 &88.0 &\bf 88.8 &25.0 &\bf 42.9 \cr
    \hline
    \end{tabular}
    }
    \label{tab:kernel_type}  
\end{table*} 

\begin{figure*}[htbp]
  \centering
  \includegraphics[width=.265\textwidth]{./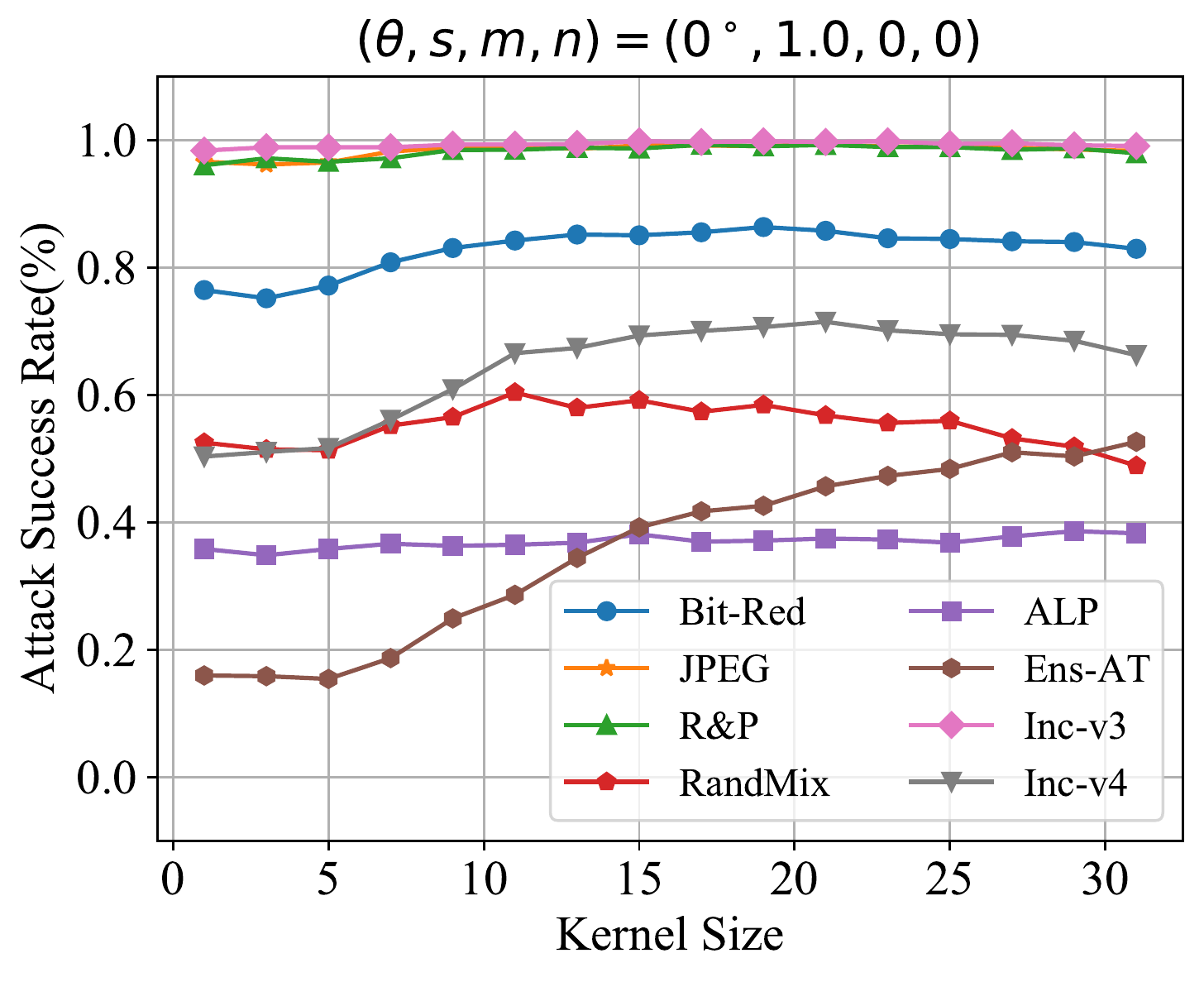}
  \includegraphics[width=.265\textwidth]{./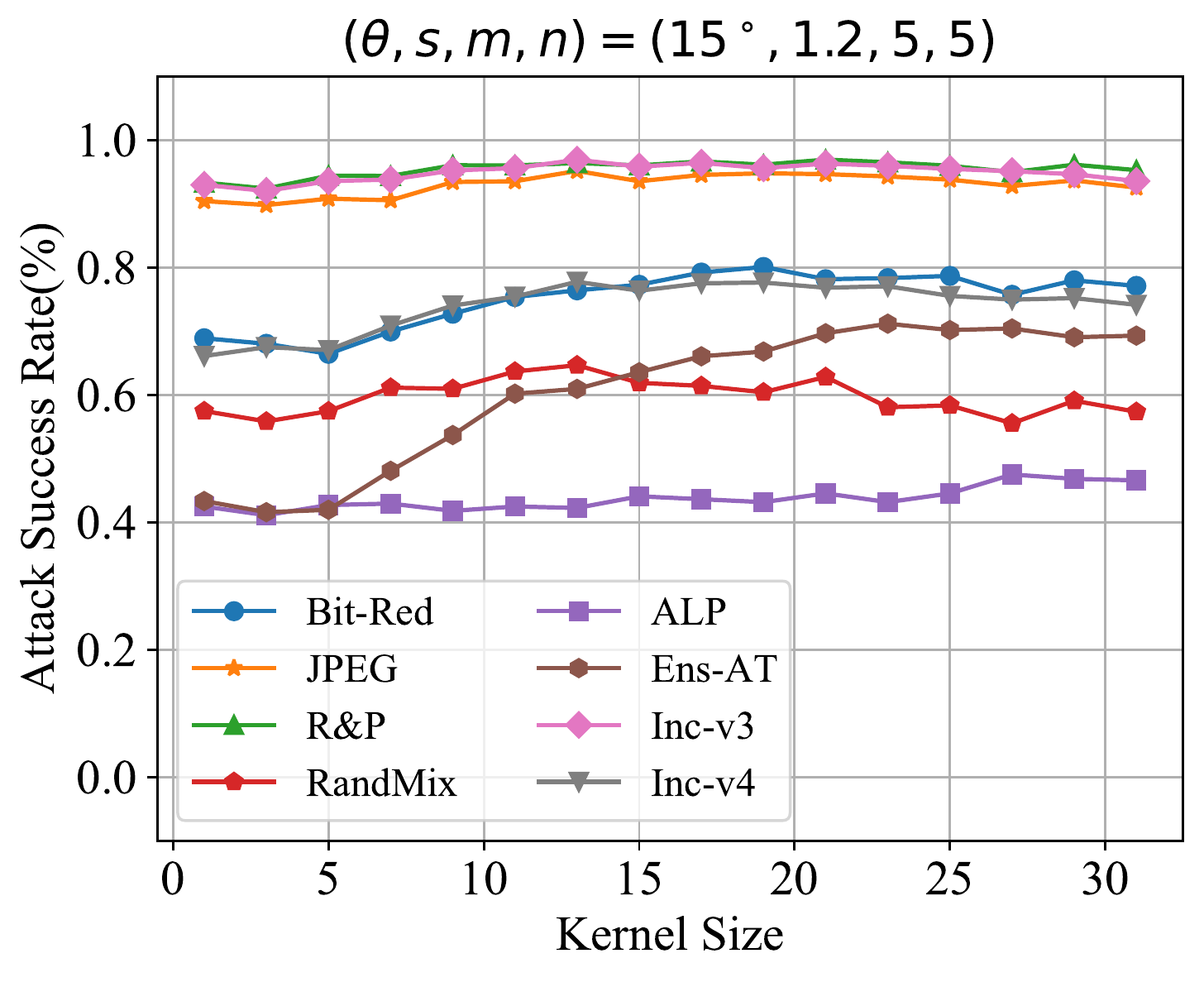}
  \includegraphics[width=.265\textwidth]{./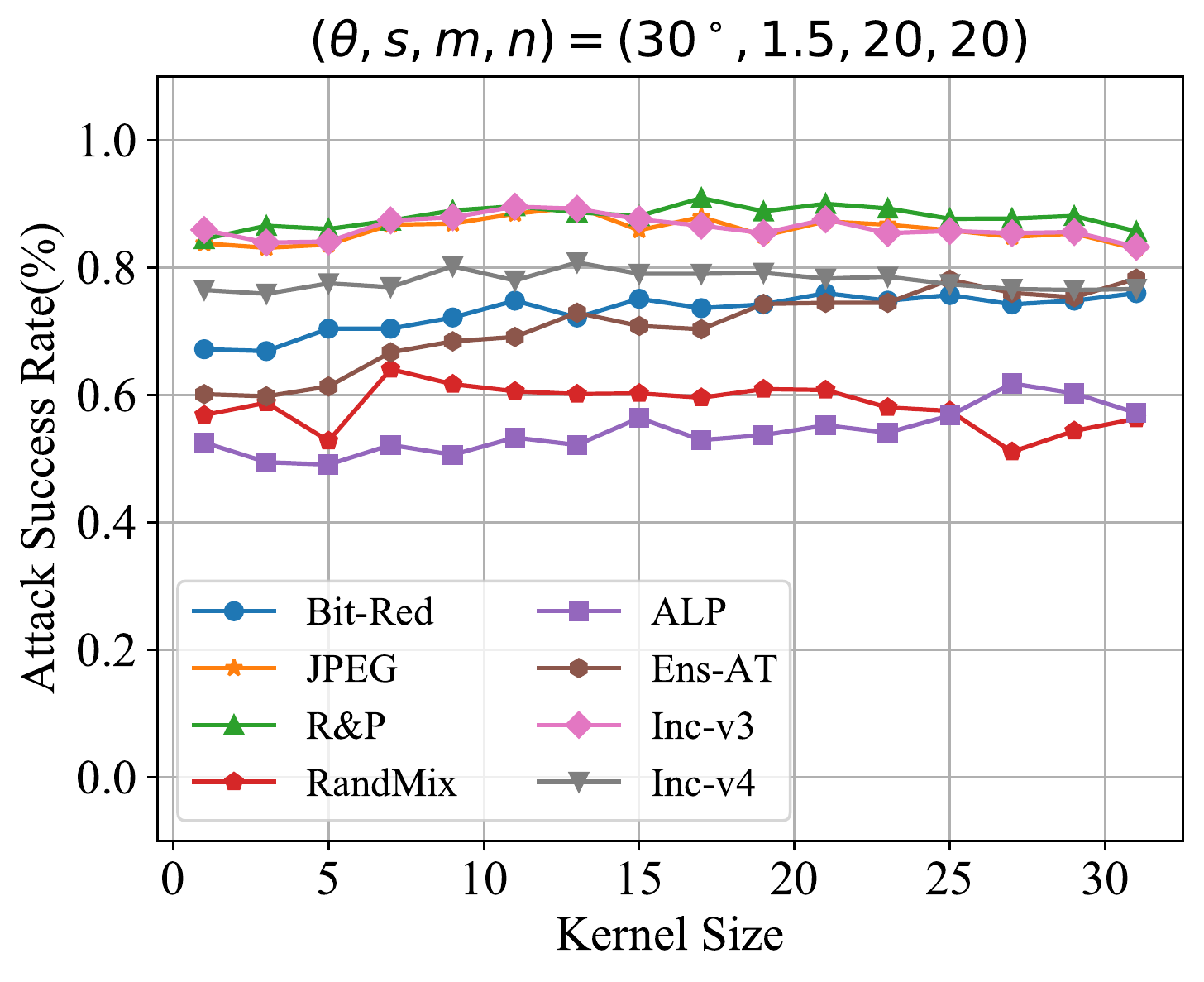}
  \includegraphics[width=.265\textwidth]{./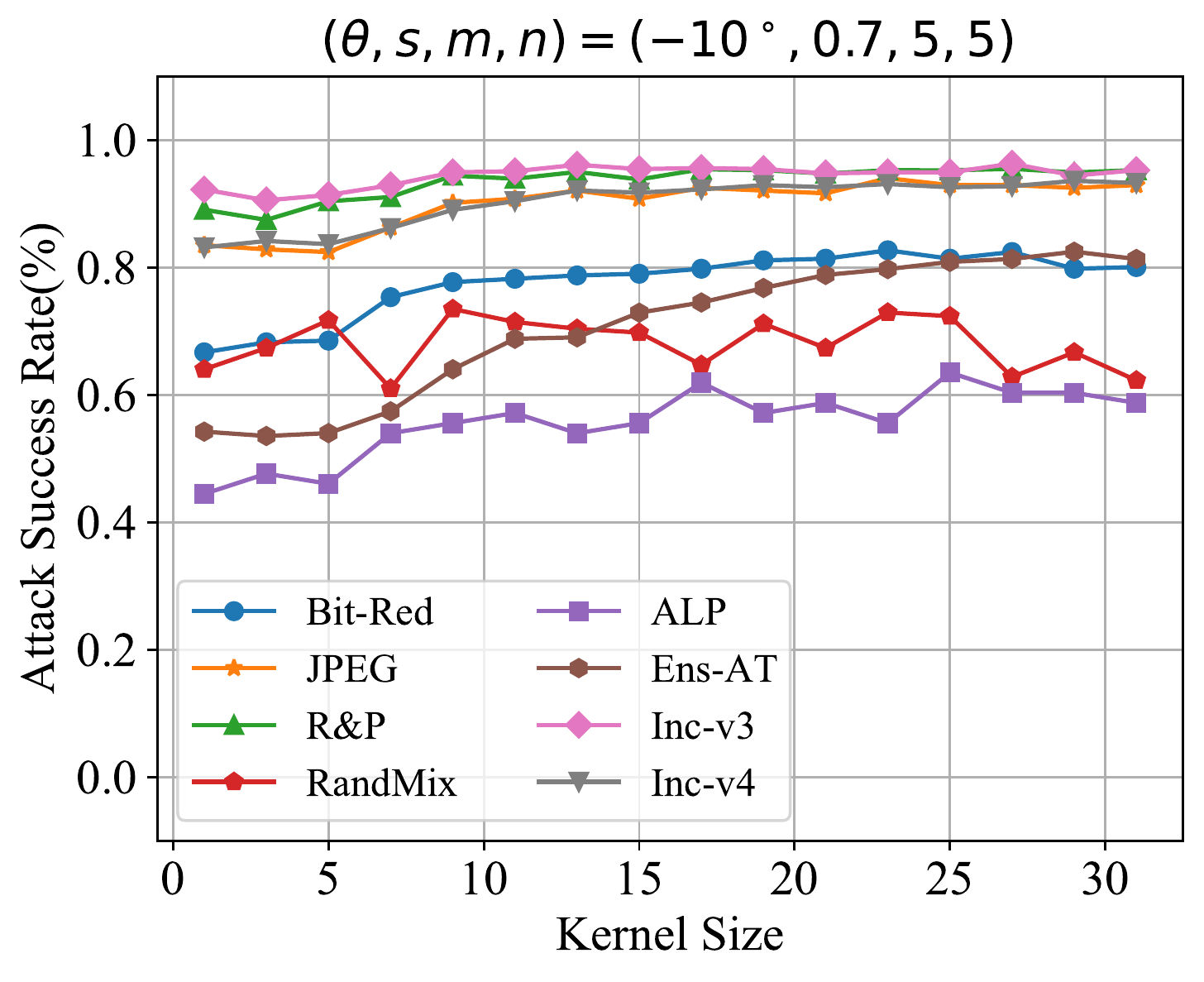}
  \includegraphics[width=.265\textwidth]{./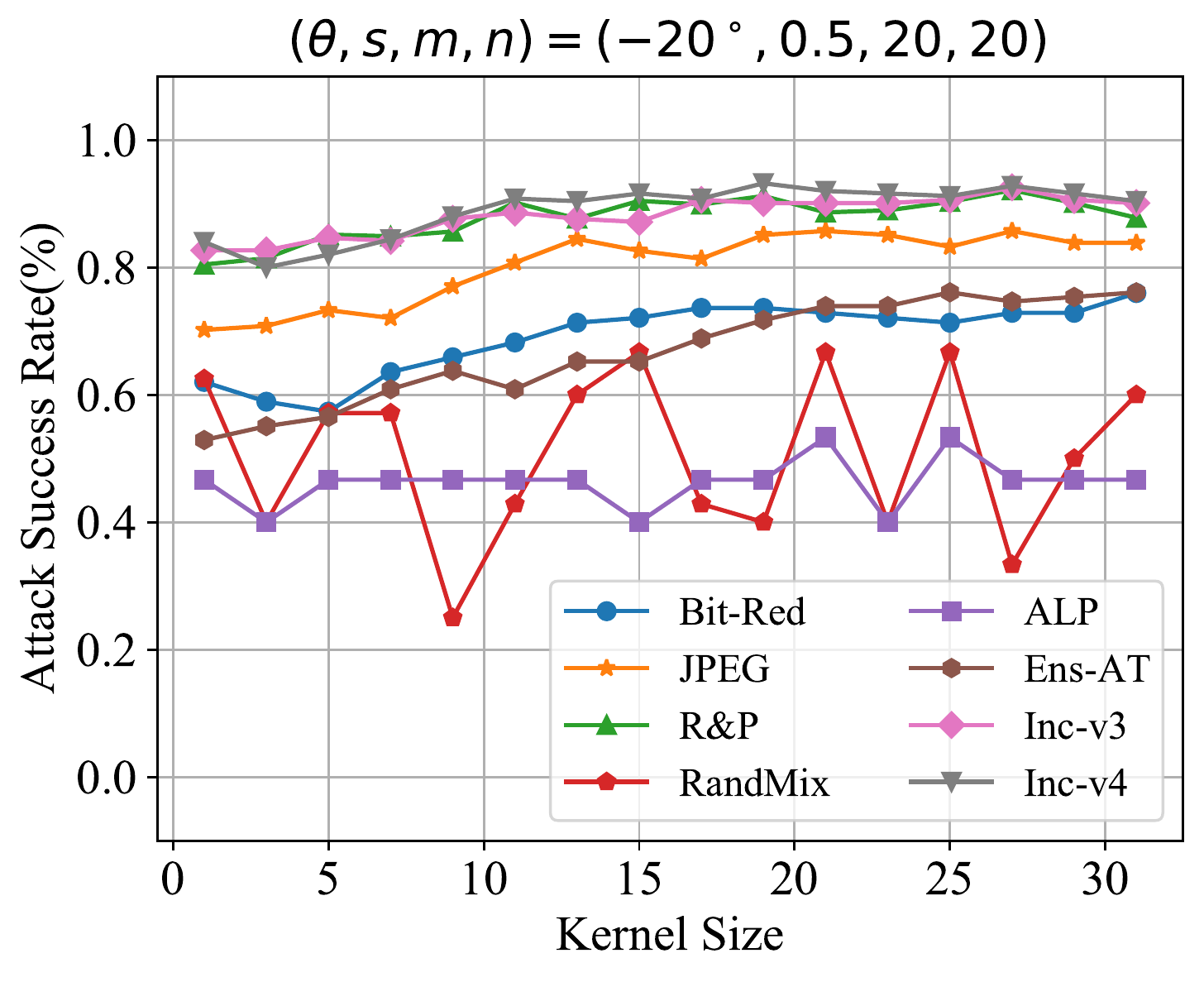}
  \caption{The ASRs (\%) of AI-DIM with different kernel sizes ranging from 1 to 31 under different affine transformations against one white-box model (Inc-V3), one black-box model (Inc-V4) and six defense models (Ens-AT, ALP, JPEG, Bit-Red, R$\&$P and RandMix). The source model are also Inc-V3.}
  \label{fig:7} 
\end{figure*}

\subsection{Ablation Study}
\label{sec:5.4}
In this section, we delve into the proposed affine-invariant gradient estimator to explore the effect of each component. Overall, our proposed affine-invariant attack can be regarded as an enhancement of the basic attack and TI attack. From the experimental results in Sec.~\ref{sec:5.2} and Sec.~\ref{sec:5.3}, the affine-invariant gradient estimator can undoubtedly improve the affine invariance and black-box ASRs on defense models of the generated adversarial examples. Delving into detailed components, we focus on the effect of convolutional kernel type and size.

\subsubsection{Kernel Type}
In order to verify the principle we have adopted that the more the  input image changes, the lower weight it should be assigned, we design another Uniform kernel to compare with the Gaussian kernel, meaning that we set the same weight for each transformed image. Tab.~\ref{tab:kernel_type} shows the performance of AI-DIM under five random selected affine transformations to one white-box model, one black-box model and six defense models. From the results, we can see that, except for very few defense models, the Gaussian kernel performs better on most models. The results also confirm our analysis in Sec.~\ref{sec:4}. 

\subsubsection{Kernel Size}
We set the kernel type as Gaussian, and further investigate the effect of kernel size. To ensure the simplicity of the experiment, we keep the size of the two kernels equal. Fig.~\ref{fig:7} shows the results obtained under the same experimental settings as the experiment concerning kernel type. From the results, we find that at first the attack success rates are positively correlated to the kernel size, but gradually tend to be stable or even descend after the kernel size reaches 15. Nonetheless, there exist some models such as Ens-AT on which the attack success rates keep increasing when the kernel size increases. In general, our method performs better when the kernel size is around 15. Therefore, the kernel size is also set to be $15\times15$ in our main experiments.

\subsection{Discussion}
\label{sec:5.5}
From a traditional perspective, affine invariance and black-box transferability of adversarial examples are two completely unrelated concepts. In our method, we only make use of the gradients of untransformed images, instead of other transformed ones. Considering the affine transformation module as part of the target model, we can regard the situation without affine transformation as the white-box setting and others as the black-box setting. Therefore, this type of affine invariance can actually be regarded as the transferability on the affine transformation. We collectively refer to these two transferabilities as the generalized transferability, referring to the generalization and robustness of adversarial attacks when facing unknown environments. With our method, the proposed affine-invariant attacks improve not only the transferability over black-box models, but also that over affine transformation. Therefore, we declare that the affine-invariant attacks further enhance the generalized transferability of adversarial examples.

\section{Conclusion}
\label{sec:7}
In this paper, we propose an affine-invariant attack method to improve affine-invariance and transferability of adversarial examples. 
Our method optimizes adversarial perturbations by a gradient estimator, providing an estimation of the affine-invariant gradient, accelerated with convolution operations. Additionally, we provide an analysis of the gradient approximation error.
Our method can be integrated into any gradient-based attack methods. We conducted extensive experiments to validate the effectiveness of the proposed method. Our best attack, AI-DIM, achieves an average success rate of $94.0\%$ against the white-box model, $87.1\%$ against the black-box model, and $69.5\%$ against the defense model under the tested affine transformations. Compared with EOT, our method yields a $30\%$ higher ASR with only about $1\%$ of the computation cost. Also, we design physical experiments and statistically show our method is more robust to complex transformations in the physical world. Furthermore, our method improves the success rate by an average of $7.5\%$ over the state-of-the-art transfer-based black-box attack on six defense models. Notably, the best method reduces the number of queries by up to $95\%$ for the tested score-based black-box attack. 

{\small
\bibliographystyle{ieee_fullname}
\bibliography{egbib}
}

\clearpage
\noindent \begin{center} {\large  \textbf{Appendix}} \end{center}

\begin{appendix}
\setcounter{corollary}{0}

\begin{figure*}[htbp]
  \centering
  \subfigure[]{\includegraphics[width=0.45\textwidth]{./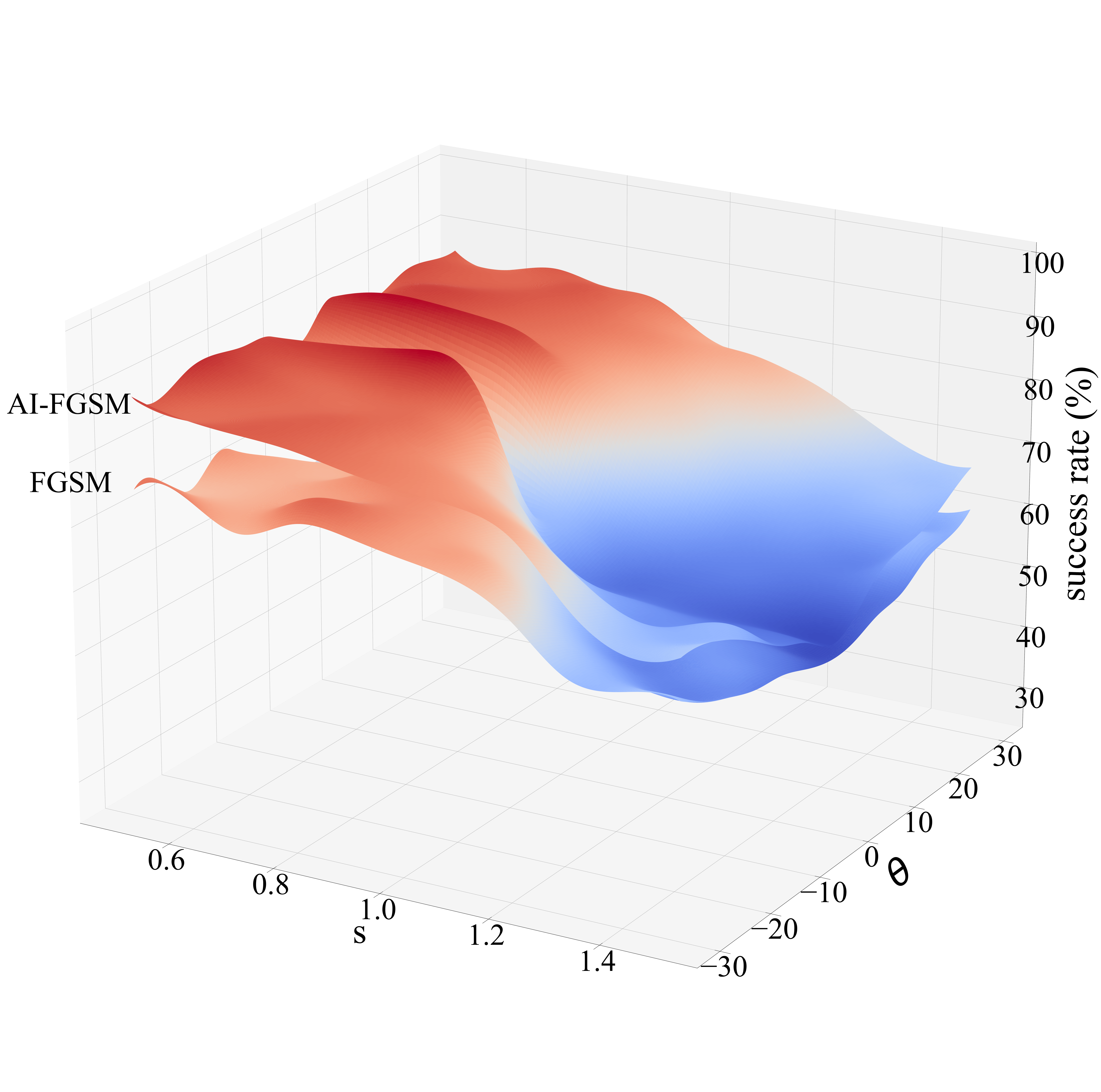}}
  \subfigure[]{\includegraphics[width=0.5\textwidth]{./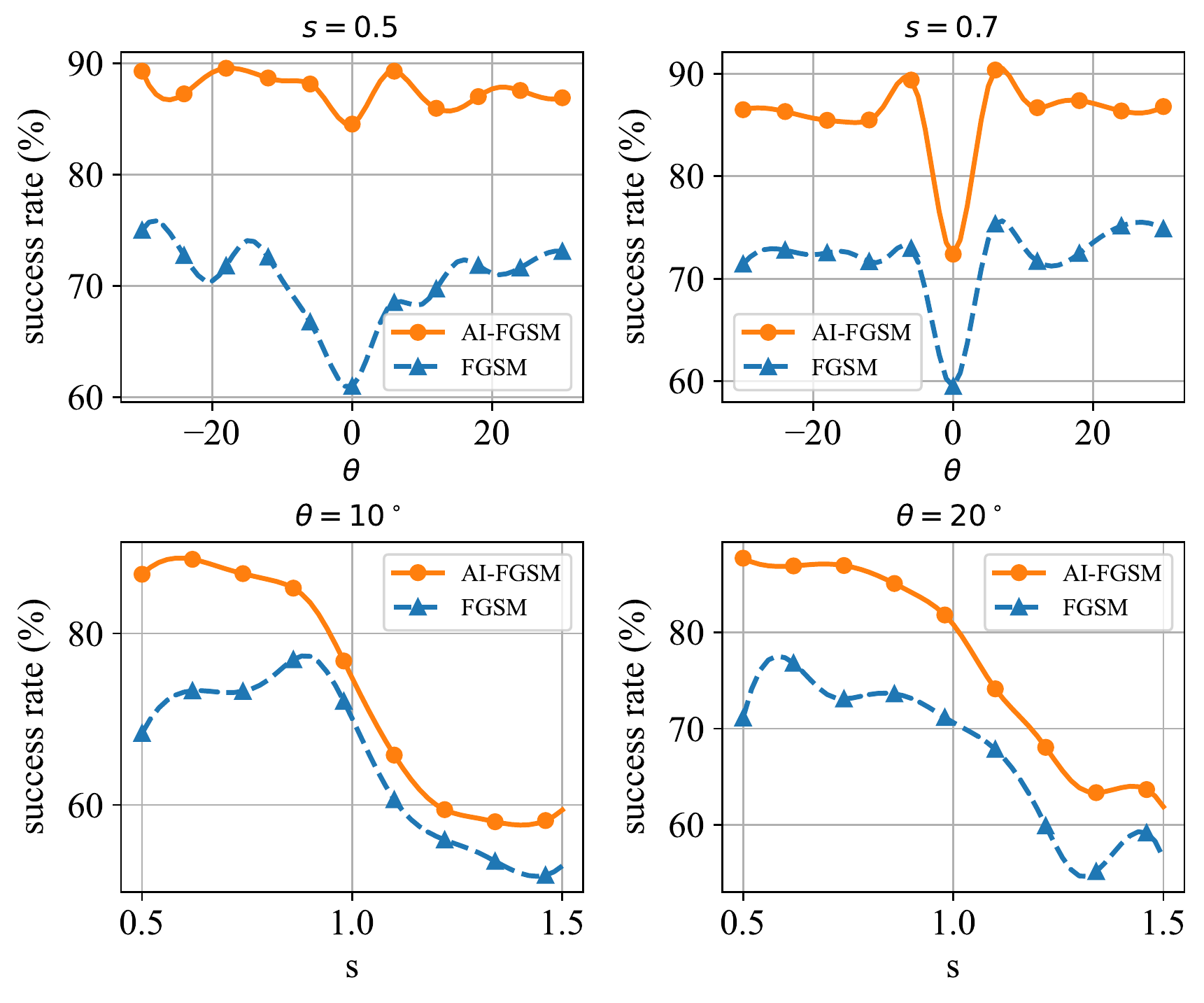}}
  \caption{The ASRs (\%) of adversarial attacks against Inc-v3 under different rotations and scalings. The adversarial examples are generated for Inc-v3 using FGSM and AI-FGSM. Fig. (a) shows the results in the form of a 3-D figure. Fig. (b) shows four randomly selected profiles of Fig. (a), which are $s=0.5$, $s=0.7$, $\theta=10^{\comp}$ and $\theta=20^{\comp}$.} 
  \label{fig:8} 
\end{figure*}
\begin{figure*}[htbp]
  \centering
  \subfigure[]{\includegraphics[width=0.45\textwidth]{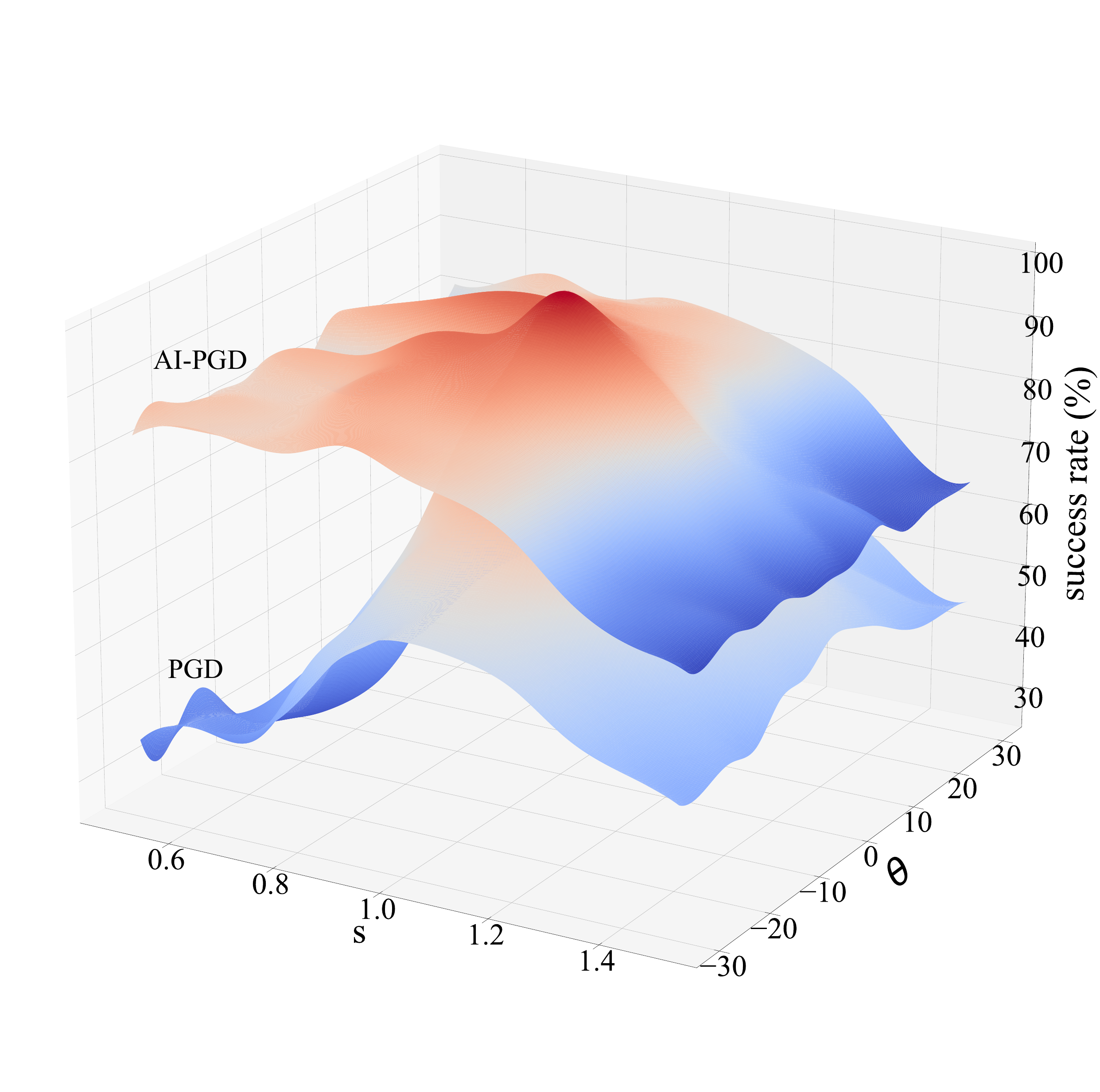} }
  \subfigure[]{\includegraphics[width=0.5\textwidth]{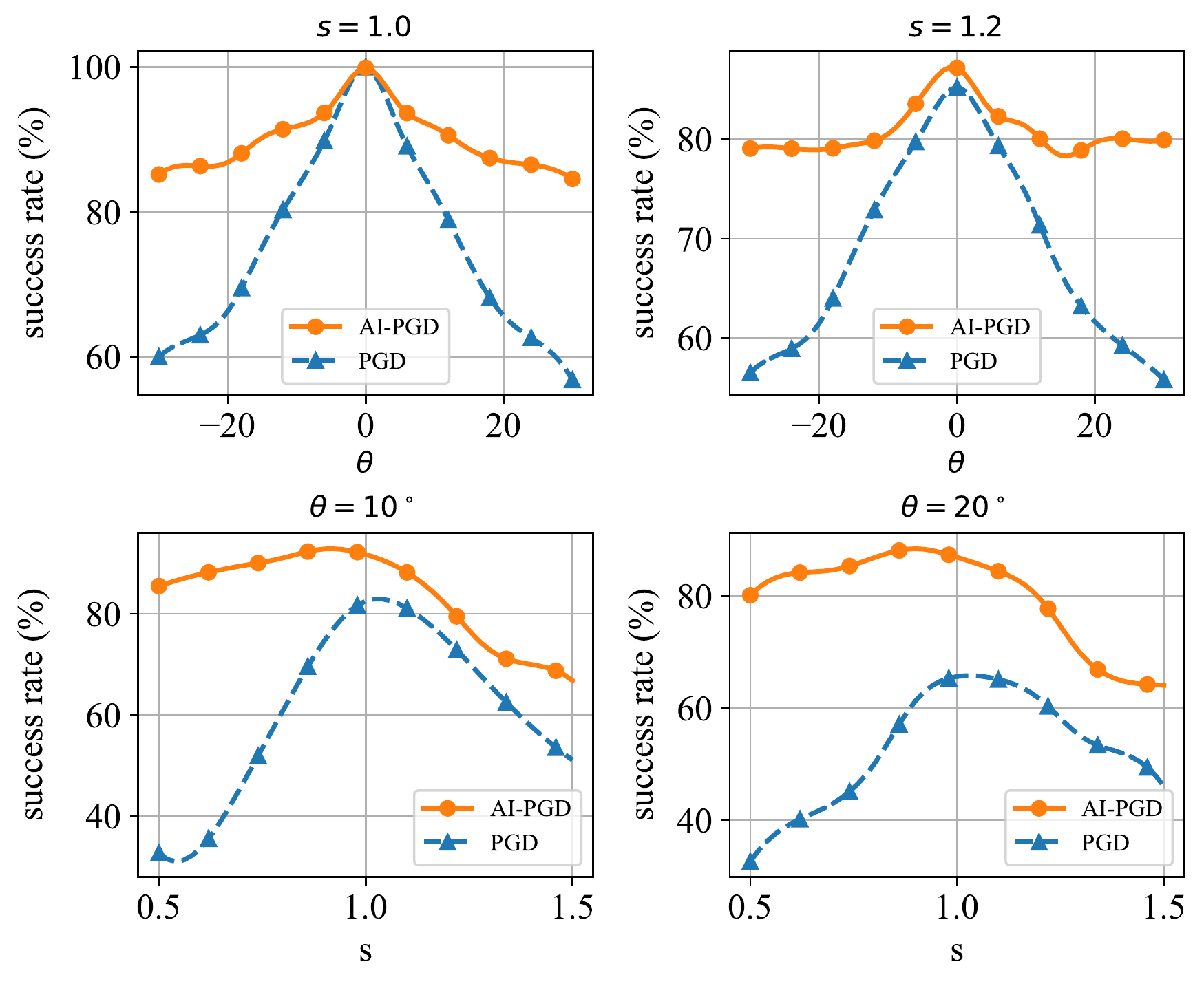} }
  \caption{The ASRs (\%) of adversarial attacks against Inc-v3 under different rotations and scalings. The adversarial examples are generated for Inc-v3 using PGD and AI-PGD. Fig. (a) shows the results in the form of a 3-D figure. Fig. (b) shows four randomly selected profiles of Fig. (a), which are $s=1.0$, $s=1.2$, $\theta=10^{\comp}$ and $\theta=20^{\comp}$.} 
  \label{fig:3} 
\end{figure*}
\begin{figure*}[htbp]
  \centering
  \subfigure[]{\includegraphics[width=0.45\textwidth]{./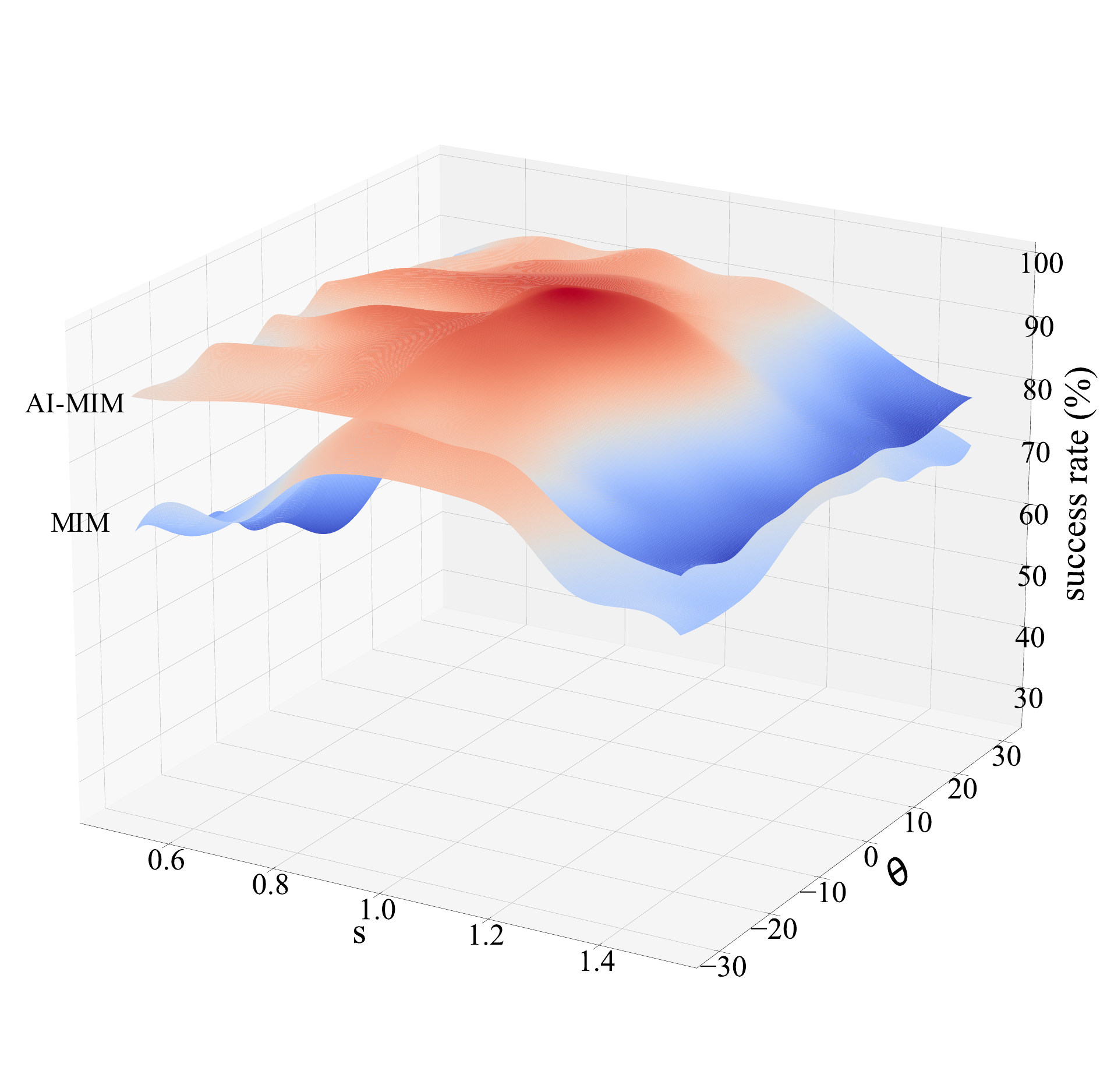} }
  \subfigure[]{\includegraphics[width=0.5\textwidth]{./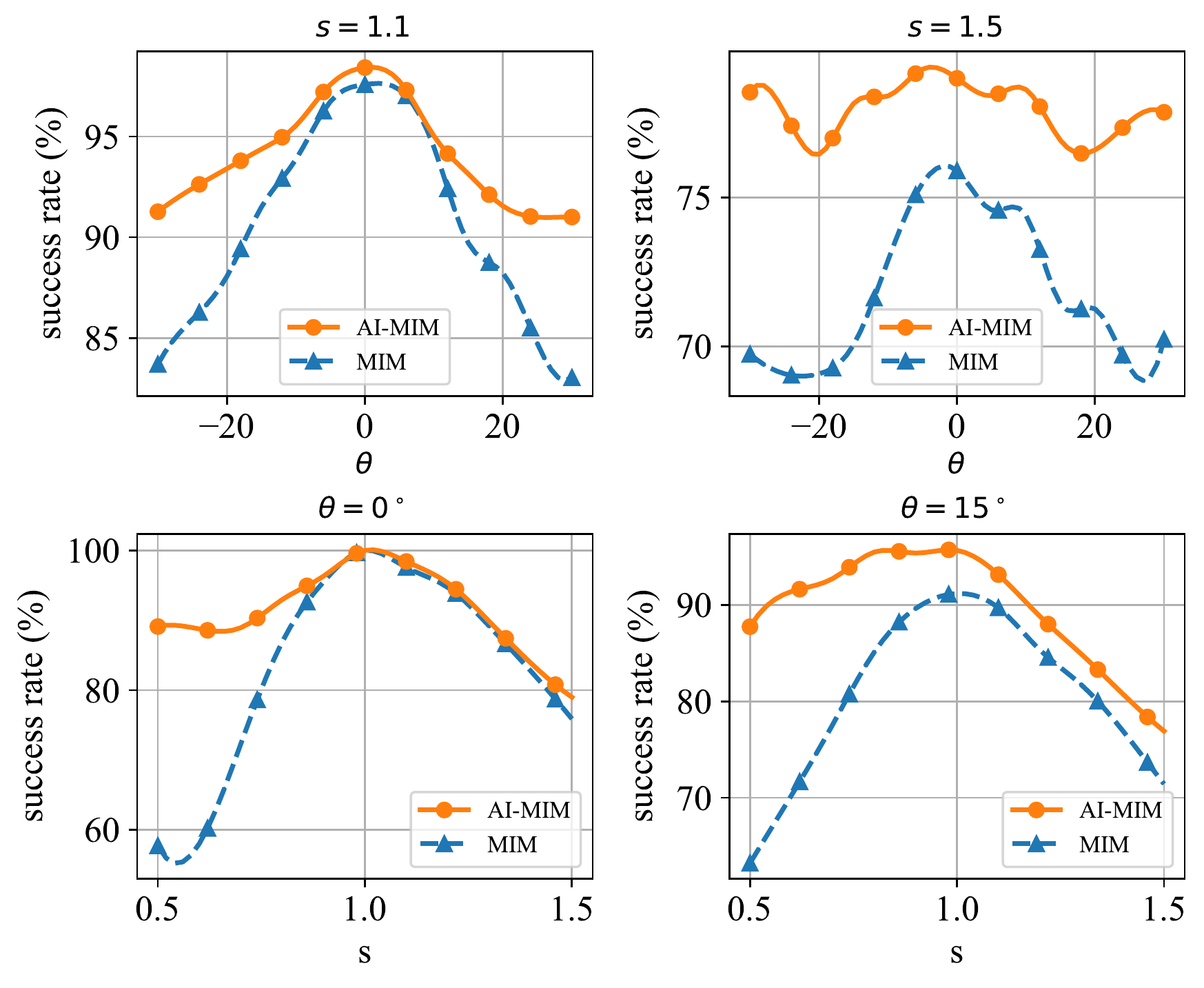} }
  \caption{The ASRs (\%) of adversarial attacks against Inc-v3 under different rotations and scalings. The adversarial examples are generated for Inc-v3 using MIM and AI-MIM. Fig. (a) shows the results in the form of a 3-D figure. Fig. (b) shows four randomly selected profiles of Fig. (a), which are $s=1.1$, $s=1.5$, $\theta=0^{\comp}$ and $\theta=15^{\comp}$.} 
  \label{fig:9} 
\end{figure*}

\begin{figure*}[htbp]
  \centering
\includegraphics[width=0.9\textwidth]{./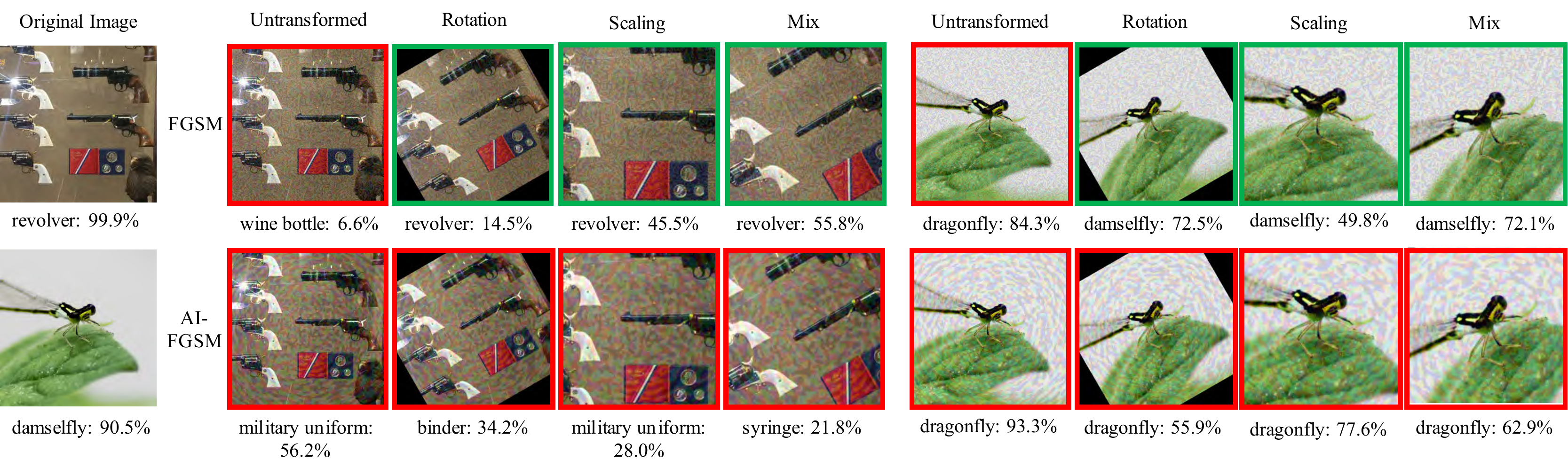}
  \caption{The adversarial examples are generated for Inc-v3 by FGSM and AI-FGSM with different transformations, including identity transform, rotation with $30^\circ$, scaling with $1.5$ factor, and mix transformation. The mix transformation consists of $30^\circ$ rotation, $1.5$ scaling factor and 20 pixels translation in both horizontal and vertical directions. In the pictures, red represents a successful attack, and green represents a failed attack.} 
  \label{fig:10} 
\end{figure*}
\begin{figure*}[htbp]
  \centering
\includegraphics[width=0.9\textwidth]{./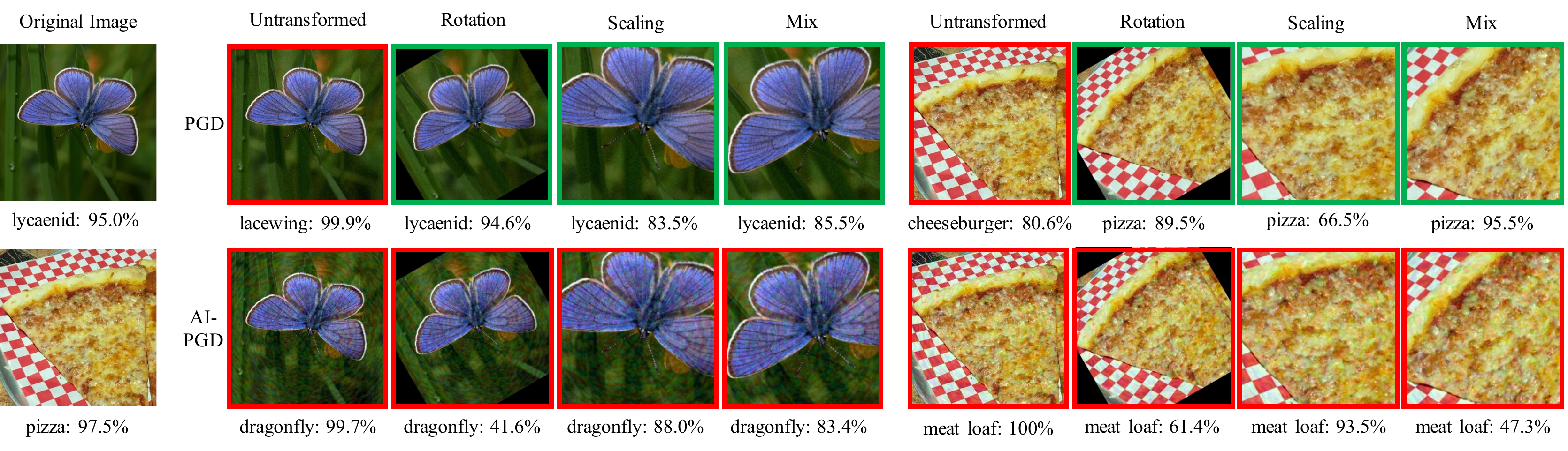}
  \caption{The adversarial examples are generated for Inc-v3 by PGD and AI-PGD with different transformations, including identity transform, rotation with $\theta=30^{\comp}$, scaling with $s = 1.5$, and mix transformation. The mix transformation consists of $\theta=30^{\comp}$, $s=1.5$ and 
  translation offsets $t=(20, 20)$. In the pictures, red represents a successful attack, and green represents a failed attack.}
  \label{fig:5} 
\end{figure*}
\begin{figure*}[htbp]
  \centering
\includegraphics[width=0.9\textwidth]{./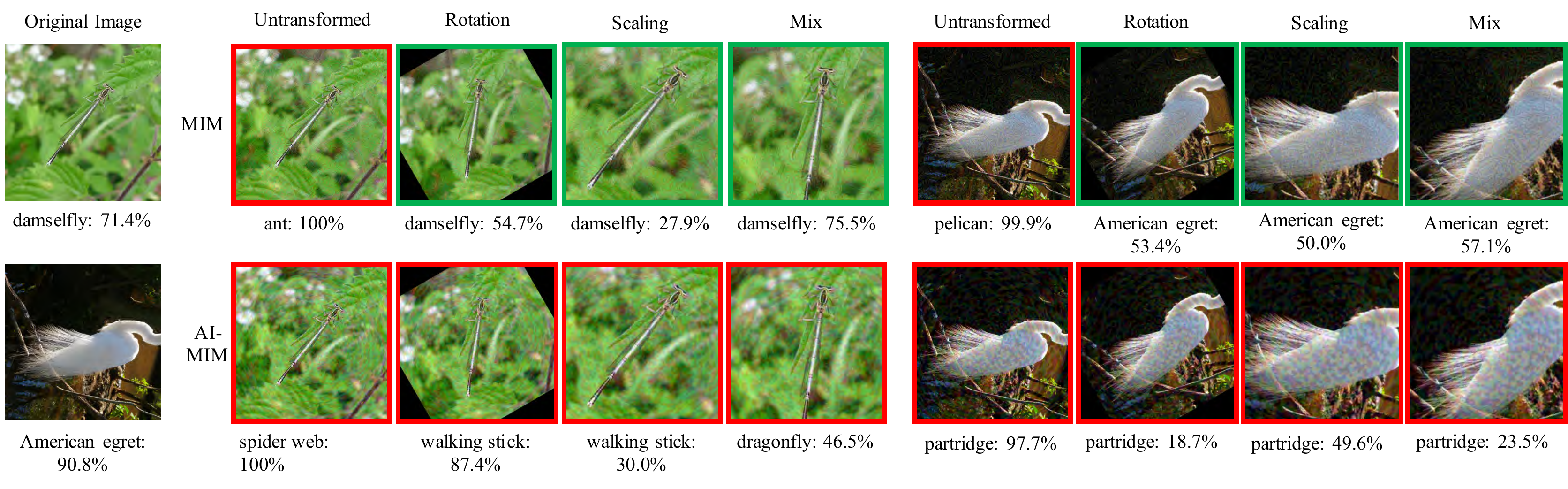}
  \caption{The adversarial examples are generated for Inc-v3 by MIM and AI-MIM with different transformations, including identity transform, rotation with $\theta=30^{\comp}$, scaling with $s = 1.5$, and mix transformation. The mix transformation consists of $\theta=30^{\comp}$, $s=1.5$ and translation offsets $t=(20, 20)$. In the pictures, red represents a successful attack, and green represents a failed attack.} 
  \label{fig:11} 
\end{figure*}

\section{Detailed Proof for Gradient Approximation}
\label{appendix:a}
Note that $\boldsymbol{g}_1 = \mathbb{E}[ \mathcal{F}_{t,q}^{-1} (\nabla_{\boldsymbol{x}} J(\boldsymbol{x}, y) \Big|_{\boldsymbol{x}=\hat{\boldsymbol{x}}} )]$ and $\boldsymbol{g}_2  = \mathbb{E}[ \mathcal{F}_{t,q}^{-1} (\nabla_{\boldsymbol{x}} J(\boldsymbol{x}, y) \Big|_{\boldsymbol{x}=\mathcal{F}_{t,q}(\hat{\boldsymbol{x}})} )]$.

\begin{corollary}
    The Euclidean distance between $\boldsymbol{g}_1$ and $\boldsymbol{g}_2$ is upper-bounded as:
    \begin{equation}
        \left\|\boldsymbol{g}_2-\boldsymbol{g}_1 \right\|_2 \leqslant c_1 \cdot c_2.
    \end{equation}
\end{corollary}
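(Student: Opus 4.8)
The plan is to combine the two stated assumptions with the elementary fact that the norm of an expectation is dominated by the expectation of the norm (Jensen's inequality for the convex map $\|\cdot\|_2$). First I would write the difference $\boldsymbol{g}_2-\boldsymbol{g}_1$ as a single expectation over the common transformation variable $(t,q)$, namely
\begin{equation}
\boldsymbol{g}_2-\boldsymbol{g}_1=\mathbb{E}\Big[\,\mathcal{F}_{t,q}^{-1}\big(\nabla_{\boldsymbol{x}}J(\boldsymbol{x},y)\big|_{\boldsymbol{x}=\mathcal{F}_{t,q}(\hat{\boldsymbol{x}})}\big)-\mathcal{F}_{t,q}^{-1}\big(\nabla_{\boldsymbol{x}}J(\boldsymbol{x},y)\big|_{\boldsymbol{x}=\hat{\boldsymbol{x}}}\big)\Big],
\end{equation}
which is legitimate because both integrands are taken against the same distribution of $(t,q)$. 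Applying $\|\mathbb{E}[Z]\|_2\leqslant\mathbb{E}[\|Z\|_2]$ then moves the norm inside the expectation.

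Next I would invoke Assumption~1 realization-by-realization in $(t,q)$, taking $\boldsymbol{x}_2=\mathcal{F}_{t,q}(\hat{\boldsymbol{x}})$ and $\boldsymbol{x}_1=\hat{\boldsymbol{x}}$, to obtain the pointwise bound
\begin{equation}
\big\|\mathcal{F}_{t,q}^{-1}\big(\nabla_{\boldsymbol{x}}J(\boldsymbol{x},y)\big|_{\boldsymbol{x}=\mathcal{F}_{t,q}(\hat{\boldsymbol{x}})}\big)-\mathcal{F}_{t,q}^{-1}\big(\nabla_{\boldsymbol{x}}J(\boldsymbol{x},y)\big|_{\boldsymbol{x}=\hat{\boldsymbol{x}}}\big)\big\|_2\leqslant c_1\big\|\mathcal{F}_{t,q}(\hat{\boldsymbol{x}})-\hat{\boldsymbol{x}}\big\|_2.
\end{equation}
Taking expectations of both sides, pulling the constant $c_1$ out of the expectation, and finally bounding the remaining term by Assumption~2, $\mathbb{E}[\|\mathcal{F}_{t,q}(\hat{\boldsymbol{x}})-\hat{\boldsymbol{x}}\|_2]\leqslant c_2$, chains together to give $\|\boldsymbol{g}_2-\boldsymbol{g}_1\|_2\leqslant c_1 c_2$, which is the claimed bound.

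The only genuine subtlety, and the step I would be most careful about, is the coupling: $\boldsymbol{g}_1$ and $\boldsymbol{g}_2$ must be understood as expectations over the \emph{same} draw of $(t,q)$ so that their difference collapses into one expectation and Assumption~1 can be applied inside the integral with its uniform constant $c_1$. If instead the two expectations ran over independent copies of $(t,q)$, one would first insert the integrand of $\boldsymbol{g}_1$ as an intermediate term and argue by the triangle inequality, but the final constant would be unchanged. Beyond this, no measurability or integrability issues arise that are not already implicit in writing down the expectations, so the argument is essentially a two-line estimate once the coupling is fixed.
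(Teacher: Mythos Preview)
Your proposal is correct and follows essentially the same route as the paper: combine the two expectations into one, apply $\|\mathbb{E}[Z]\|_2\leqslant\mathbb{E}[\|Z\|_2]$, then use Assumption~1 pointwise followed by Assumption~2. The paper's proof is precisely this four-line chain, without the explicit remark on coupling that you (rightly) flag.
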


\begin{proof}
\begin{equation}
\begin{split}
    &\left\|\boldsymbol{g}_2-\boldsymbol{g}_1 \right\|_2 \\
    =& \left\|\mathbb{E}[ \mathcal{F}_{t,q}^{-1} (\nabla_{\boldsymbol{x}} J(\boldsymbol{x}, y) \Big|_{\boldsymbol{x}=\mathcal{F}_{t,q}(\hat{\boldsymbol{x}})})-\mathcal{F}_{t,q}^{-1}(\nabla_{\boldsymbol{x}} J(\boldsymbol{x}, y) \Big|_{\boldsymbol{x}=\hat{\boldsymbol{x}}} )]  \right\|_2 \\
    \leqslant& \mathbb{E}[\left\| \mathcal{F}_{t,q}^{-1} (\nabla_{\boldsymbol{x}} J(\boldsymbol{x}, y) \Big|_{\boldsymbol{x}=\mathcal{F}_{t,q}(\hat{\boldsymbol{x}})})-\mathcal{F}_{t,q}^{-1}(\nabla_{\boldsymbol{x}} J(\boldsymbol{x}, y) \Big|_{\boldsymbol{x}=\hat{\boldsymbol{x}}} )\right\|_2]  \\
        \leqslant& c_1 \cdot \mathbb{E}[\left\| \mathcal{F}_{t,q}(\hat{\boldsymbol{x}})-\hat{\boldsymbol{x}} \right\|_2] \\
    \leqslant& c_1 \cdot c_2.
\end{split}
\end{equation}
\end{proof}

\begin{corollary}
    The cosine similarity of $\boldsymbol{g}_1$ and $\boldsymbol{g}_2$ is lower-bounded as: 
    \begin{equation}
        cossim(\boldsymbol{g}_1, \boldsymbol{g}_2) \geqslant 1 - \frac{(c_1 c_2)^2}{2c_3^2},
    \end{equation}
    where $cossim(\cdot, \cdot)$ is the cosine similarity function.
\end{corollary}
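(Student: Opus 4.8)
The plan is to reduce everything to the polarization identity together with the two facts already available: the distance bound $\|\boldsymbol{g}_2 - \boldsymbol{g}_1\|_2 \leqslant c_1 c_2$ from Corollary~1, and the norm lower bounds $\|\boldsymbol{g}_1\|_2 \geqslant c_3$, $\|\boldsymbol{g}_2\|_2 \geqslant c_3$ from Assumption~3. First I would expand
\begin{equation}
    \left\|\boldsymbol{g}_2 - \boldsymbol{g}_1\right\|_2^2 = \left\|\boldsymbol{g}_1\right\|_2^2 + \left\|\boldsymbol{g}_2\right\|_2^2 - 2\langle \boldsymbol{g}_1, \boldsymbol{g}_2\rangle,
\end{equation}
so that $\langle \boldsymbol{g}_1, \boldsymbol{g}_2\rangle = \tfrac{1}{2}\big(\|\boldsymbol{g}_1\|_2^2 + \|\boldsymbol{g}_2\|_2^2 - \|\boldsymbol{g}_2 - \boldsymbol{g}_1\|_2^2\big)$, and therefore
\begin{equation}
    cossim(\boldsymbol{g}_1, \boldsymbol{g}_2) = \frac{\|\boldsymbol{g}_1\|_2^2 + \|\boldsymbol{g}_2\|_2^2 - \|\boldsymbol{g}_2 - \boldsymbol{g}_1\|_2^2}{2\|\boldsymbol{g}_1\|_2\,\|\boldsymbol{g}_2\|_2}.
\end{equation}

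Next I would apply the AM--GM inequality $\|\boldsymbol{g}_1\|_2^2 + \|\boldsymbol{g}_2\|_2^2 \geqslant 2\|\boldsymbol{g}_1\|_2\,\|\boldsymbol{g}_2\|_2$ to the numerator, which yields
\begin{equation}
    cossim(\boldsymbol{g}_1, \boldsymbol{g}_2) \geqslant 1 - \frac{\|\boldsymbol{g}_2 - \boldsymbol{g}_1\|_2^2}{2\|\boldsymbol{g}_1\|_2\,\|\boldsymbol{g}_2\|_2}.
\end{equation}
Since we want a \emph{lower} bound and the fractional term enters with a minus sign, the remaining task is to upper-bound the numerator of that fraction and lower-bound its denominator. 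Corollary~1 gives $\|\boldsymbol{g}_2 - \boldsymbol{g}_1\|_2^2 \leqslant (c_1 c_2)^2$, and Assumption~3 gives $\|\boldsymbol{g}_1\|_2\,\|\boldsymbol{g}_2\|_2 \geqslant c_3^2$; substituting both produces
\begin{equation}
    cossim(\boldsymbol{g}_1, \boldsymbol{g}_2) \geqslant 1 - \frac{(c_1 c_2)^2}{2 c_3^2},
\end{equation}
which is exactly the claimed inequality.

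There is essentially no hard step here — the only thing to be careful about is the direction-of-inequality bookkeeping in the final substitution (ensuring we weaken, not strengthen, the bound when replacing $\|\boldsymbol{g}_2 - \boldsymbol{g}_1\|_2$ by $c_1 c_2$ and $\|\boldsymbol{g}_1\|_2\|\boldsymbol{g}_2\|_2$ by $c_3^2$), and noting that the AM--GM step is the place where the factor of $2$ in the denominator of the final bound is "spent." One could also remark that the bound is only informative when $c_1 c_2 \leqslant \sqrt{2}\,c_3$, i.e. in the small-transformation regime where the Gaussian kernels keep $c_2$ small, which is consistent with the discussion following the corollary.
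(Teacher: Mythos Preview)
Your proof is correct and, in fact, slightly cleaner than the paper's own argument. The paper does not use AM--GM; instead it normalizes first, writing $cossim(\boldsymbol{g}_1,\boldsymbol{g}_2)=1-\tfrac{1}{2}\|\tilde{\boldsymbol{g}}_2-\tilde{\boldsymbol{g}}_1\|_2^2$ with $\tilde{\boldsymbol{g}}_i=\boldsymbol{g}_i/\|\boldsymbol{g}_i\|_2$, and then bounds $\|\tilde{\boldsymbol{g}}_2-\tilde{\boldsymbol{g}}_1\|_2$ by a WLOG case split (assuming $\|\boldsymbol{g}_2\|_2\geqslant\|\boldsymbol{g}_1\|_2$) together with the geometric inequality
\[
\left\|\frac{\boldsymbol{g}_2}{\|\boldsymbol{g}_2\|_2}-\frac{\boldsymbol{g}_1}{\|\boldsymbol{g}_1\|_2}\right\|_2
\;\leqslant\;
\left\|\frac{\boldsymbol{g}_2}{\|\boldsymbol{g}_1\|_2}-\frac{\boldsymbol{g}_1}{\|\boldsymbol{g}_1\|_2}\right\|_2
=\frac{\|\boldsymbol{g}_2-\boldsymbol{g}_1\|_2}{\|\boldsymbol{g}_1\|_2},
\]
which is true but requires a short monotonicity check. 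Your AM--GM step accomplishes the same thing in one line and without any case analysis, and it makes transparent where the factor of $2$ and the product $\|\boldsymbol{g}_1\|_2\|\boldsymbol{g}_2\|_2\geqslant c_3^2$ come from. The two routes yield the identical final bound; yours is simply the more direct of the two.
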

\begin{proof}
    We first denote the normalized gradients as $\tilde{\boldsymbol{g}}_1=\frac{\boldsymbol{g}_1}{\left \|\boldsymbol{g}_1 \right \|_2 }$ and $\tilde{\boldsymbol{g}}_2=\frac{\boldsymbol{g}_2}{\left \|\boldsymbol{g}_2 \right \|_2 }$. From the definition of cosine similarity, we have: 
    \begin{equation}
        cossim(\boldsymbol{g}_1, \boldsymbol{g}_2)=\frac{\boldsymbol{g}_1\cdot \boldsymbol{g}_2}{\left\|\boldsymbol{g}_1 \right \|_2 \left\|\boldsymbol{g}_2 \right \|_2 }= 1-\frac{\left \| \tilde{\boldsymbol{g}}_2-\tilde{\boldsymbol{g}}_1 \right \|_2^2}{2}.
    \end{equation}
    With Eq.~(24) in Sec.~IV, we assume $\left\|\boldsymbol{g}_2 \right\|_2 \geqslant \left\|\boldsymbol{g}_1 \right\|_2 \geqslant c_3$. Then we have:
    \begin{equation}
    \begin{split}
        \left \| \tilde{\boldsymbol{g}}_2-\tilde{\boldsymbol{g}}_1 \right \|_2^2 &= \left \|\frac{\boldsymbol{g}_2}{\left \|\boldsymbol{g}_2 \right \|_2 } - \frac{\boldsymbol{g}_1}{\left \|\boldsymbol{g}_1 \right \|_2 } \right \|_2^2 \\
        &\leqslant \left \|\frac{\boldsymbol{g}_2}{\left \|\boldsymbol{g}_1 \right \|_2 } - \frac{\boldsymbol{g}_1}{\left \|\boldsymbol{g}_1 \right \|_2 } \right \|_2^2 \\
        &\leqslant \frac{1}{c_3^2}\cdot \left \| \boldsymbol{g}_2- \boldsymbol{g}_1\right \|_2^2 .
    \end{split}
    \end{equation}
    We can get a similar corollary for the case of $\left\|\boldsymbol{g}_1 \right\|_2 \geqslant \left\|\boldsymbol{g}_2 \right\|_2 \geqslant c_3$.
    Therefore, we finally have:
    \begin{equation}
    \begin{split}
        cossim(\boldsymbol{g}_1, \boldsymbol{g}_2)&=1-\frac{\left \| \tilde{\boldsymbol{g}}_2-\tilde{\boldsymbol{g}}_1 \right \|_2^2}{2} \\
        &\geqslant 1 - \frac{1}{2 c_3^2}\cdot \left \| \boldsymbol{g}_2- \boldsymbol{g}_1\right \|_2^2 \\
        &\geqslant 1-\frac{(c_1 c_2)^2}{2c_3^2} .
    \end{split}
    \end{equation}
\end{proof}

\section{Full Results For the Robustness to Affine Transformation}
\label{appendix:b}
In this section, we further visualize the white-box attack-success-rate function with rotation angle and scaling factor as independent variables for FGSM, AI-FGSM, PGD, AI-PGD and MIM, AI-MIM. Results are shown in  Fig.~\ref{fig:8} for FGSM and AI-FGSM, Fig.~\ref{fig:3} for PGD and AI-PGD, and Fig.~\ref{fig:9} for MIM and AI-MIM. The results show that our method greatly improves the attack success rate under different affine transformations, compared with the other two basic attack methods. 

\section{Full Visualization Results for transformed examples}
\label{appendix:c}
In this section, we further show adversarial images generated for the Inc-v3 model by FGSM, MIM and their extensions AI-FGSM and AI-MIM with different affine transformations. We present the visualization results of 
FGSM and AI-FGSM in Fig.~\ref{fig:10}, PGD and AI-PGD in Fig.~\ref{fig:5}, and MIM and AI-MIM in Fig.~\ref{fig:11}.
\end{appendix}

\end{document}